\newtheorem{theorem}{Theorem}
\newtheorem{prop}{Proposition}
\newtheorem{lemma}[theorem]{Lemma}
\newcommand{\diag}{\mathop{\mathrm{diag}}}
\newcommand*{\bfrac}[2]{\genfrac{\lbrace}{\rbrace}{0pt}{}{#1}{#2}}
\icmltitlerunning{Recomposing the Reinforcement Learning Building-Blocks
 with Hypernetworks}
\begin{document}

% If your paper is accepted and the title of your paper is very long,
% the style will print as headings an error message. Use the following
% command to supply a shorter title of your paper so that it can be
% used as headings.
%
% \runningtitle{Recomposing the Reinforcement Learning Building-Blocks}

% If your paper is accepted and the number of authors is large, the
% style will print as headings an error message. Use the following
% command to supply a shorter version of the authors names so that
% they can be used as headings (for example, use only the surnames)
%
%\runningauthor{Surname 1, Surname 2, Surname 3, ...., Surname n}

\twocolumn[
\icmltitle{Recomposing the Reinforcement Learning Building Blocks with Hypernetworks}

% List of affiliations: The first argument should be a (short)
% identifier you will use later to specify author affiliations
% Academic affiliations should list Department, University, City, Region, Country
% Industry affiliations should list Company, City, Region, Country

% You can specify symbols, otherwise they are numbered in order.
% Ideally, you should not use this facility. Affiliations will be numbered
% in order of appearance and this is the preferred way.
\icmlsetsymbol{equal}{*}

\begin{icmlauthorlist}
\icmlauthor{Shai Keynan*}{to}
\icmlauthor{Elad Sarafian*}{to}
\icmlauthor{Sarit Kraus}{to}
\end{icmlauthorlist}

\icmlaffiliation{to}{Department of Computer Science, Bar-Ilan University, Ramat-Gan, Israel}

\icmlcorrespondingauthor{Shai Keynan, Elad Sarafian}{shai.keynan@gmail.com, elad.sarafian@gmail.com}

\icmlkeywords{Reinforcement Learning, Meta Learning, Neural Architecture, Hypernetworks, ICML}

\vskip 0.3in
]

\printAffiliationsAndNotice{\icmlEqualContribution}

\begin{abstract}
The Reinforcement Learning (RL) building blocks, i.e. $Q$-functions and policy networks, usually take elements from the cartesian product of two domains as input. In particular, the input of the $Q$-function is both the state and the action, and in multi-task problems (Meta-RL) the policy can take a state and a context. Standard architectures tend to ignore these variables' underlying interpretations and simply concatenate their features into a single vector. In this work, we argue that this choice may lead to poor gradient estimation in actor-critic algorithms and high variance learning steps in Meta-RL algorithms. To consider the interaction between the input variables, we suggest using a Hypernetwork architecture where a primary network determines the weights of a conditional dynamic network. We show that this approach improves the gradient approximation and reduces the learning step variance, which both accelerates learning and improves the final performance. We demonstrate a consistent improvement across different locomotion tasks and different algorithms both in RL (TD3 and SAC) and in Meta-RL (MAML and PEARL).
\end{abstract}

\begin{figure}[!ht]
\begin{center}
%\framebox[4.0in]{$\;$}
    \includegraphics[width=\linewidth]{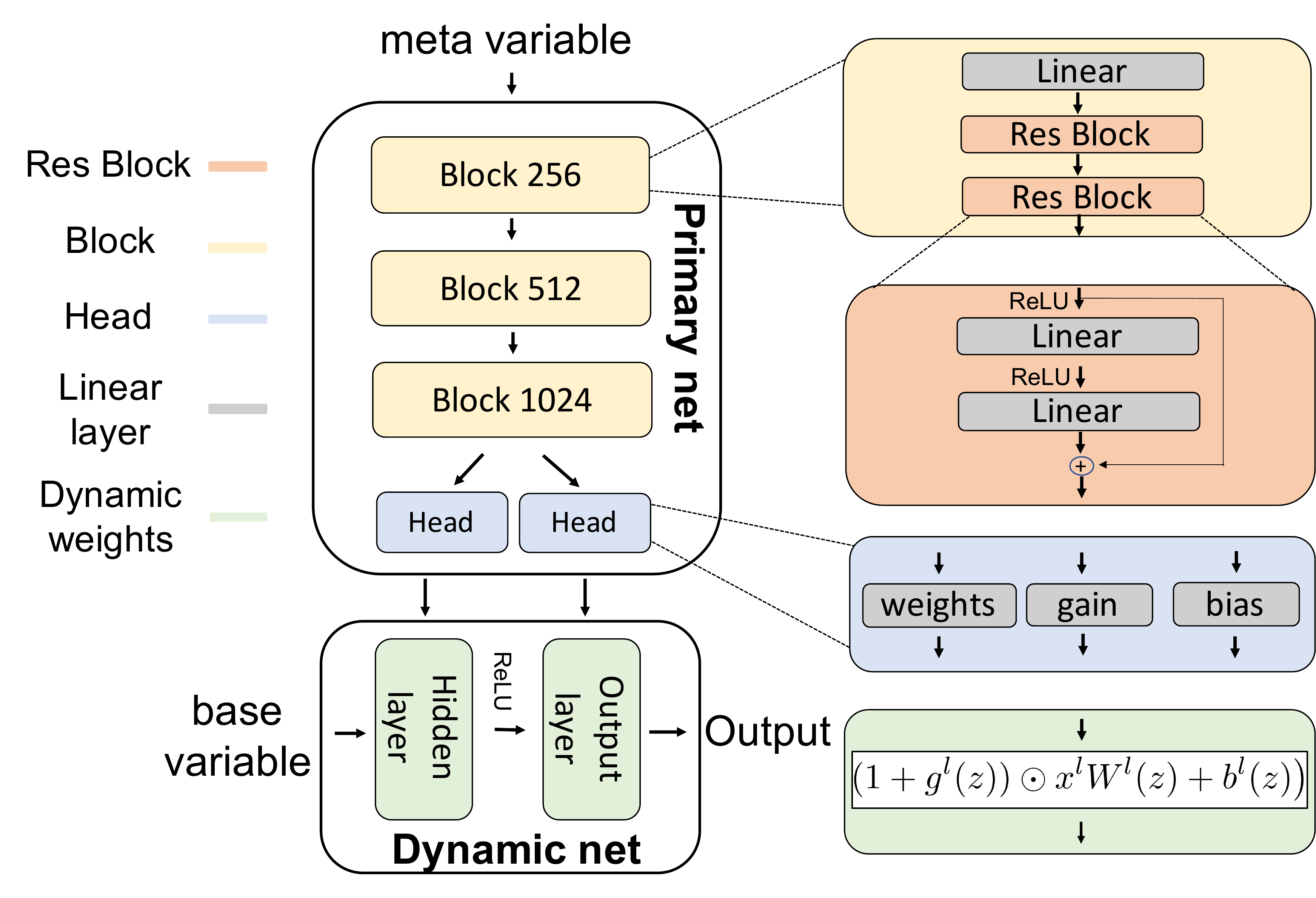}
\end{center}
\caption{The Hypernetwork architecture}
\label{fig:HyperSceme}
\end{figure}

\section{Introduction}

The rapid development of deep neural-networks as general-purpose function approximators has propelled the recent Reinforcement Learning (RL) renaissance \cite{zai2020deep}. RL algorithms have progressed in robustness, e.g. from \cite{lillicrap2016continuous} to \cite{fujimoto2018addressing}; exploration \cite{haarnoja2018soft}; gradient sampling \cite{schulman2017proximal, schulman2015trust}; and off-policy learning \cite{fujimoto2019off, kumar2019stabilizing}. Many actor-critic algorithms have focused on improving the critic learning routines by modifying the target value \cite{hasselt2016deep}, which enables more accurate and robust $Q$-function approximations. While this greatly improves the policy optimization efficiency, the performance is still bound by the networks' ability to represent $Q$-functions and policies. Such a constraint calls for studying and designing neural models suited for the representation of these RL building blocks.

A critical insight in designing neural models for RL is the reciprocity between the state and the action, which both serve as the input for the $Q$-function. At the start, each input can be processed individually according to its source domain. For example, when $s$ is a vector of images, it is common to employ CNN models \cite{kaiser2019model}, and when $s$ or $a$ are natural language words, each input can be processed separately with embedding vectors \cite{he2016deep}. The common practice in incorporating the state and action learnable features into a single network is to concatenate the two vectors and follow with MLP to yield the $Q$-value \cite{schulman2017proximal}. In this work, we argue that for actor-critic RL algorithms \cite{grondman2012survey}, such an off-the-shelf method could be significantly improved with Hypernetworks.

In actor-critic methods, for each state, sampled from the dataset distribution, the actor's task is to solve an optimization problem over the action distribution, i.e. the policy. This motivates an architecture where the $Q$-function is explicitly modeled as the value function of a contextual bandit \cite{lattimore2020bandit} $Q^{\pi}(s,a) = Q_s^{\pi}(a)$ where $s$ is the context. While standard architectures are not designed to model such a relationship, Hypernetworks were explicitly constructed for that purpose \cite{ha2016hypernetworks}. Hypernetworks, also called meta-networks, can represent hierarchies by transforming a {\em meta} variable into a context-dependent function that maps a {\em base} variable to the required output space. This emphasizes the underlying dynamic between the meta and base variables and has found success in a variety of domains such as Bayesian neural-networks \cite{hyper_bayesian}, continual learning \cite{von2019continual}, generative models \cite{DBLP:journals/corr/abs-1901-11058} and adversarial defense \cite{sun2017hypernetworks}. The practical success has sparked interest in the theoretical properties of Hypernetworks. For example, it has recently been shown that they enjoy better parameter complexity than classical models which concatenate the base and meta-variables together \cite{galanti2020comparing, galanti2020modularity}.

When analyzing the critic's ability to represent the $Q$-function, it is important to notice that in order to optimize the policy, modern off-policy actor-critic algorithms \cite{fujimoto2018addressing,haarnoja2018soft} utilize only the parametric neural gradient of the critic with respect to the action input,  i.e., $\nabla_a Q_{\theta}^{\pi}(s, a)$.\footnote{This is in contrast to the REINFORCE approach \cite{williams1992simple} based on the policy gradient theorem \cite{sutton2000policy} which does not require a differentiable $Q$-function estimation.} Recently, \cite{ilyas2019closer} examined the accuracy of the policy gradient in on-policy algorithms. They demonstrated that standard RL implementations achieve gradient estimation with a near-zero cosine similarity when compared to the ``true" gradient. Therefore, recovering better gradient approximations has the potential to substantially improve the RL learning process. Motivated by the need to obtain high-quality gradient approximations, we set out to investigate the gradient accuracy of Hypernetworks with respect to standard models. In Sec. \ref{sec:recomposing_actor_critic} we analyze three critic models and find that the Hypernetwork model with a state as a meta-variable enjoys better gradient accuracy which translates into a faster learning rate. 

Much like the induced hierarchy in the critic, meta-policies that optimize multi-task RL problems have a similar structure as they combine a task-dependent context and a state input. While some algorithms like MAML \cite{finn2017model} and LEO \cite{rusu2018metalearning} do not utilize an explicit context, other works, e.g. PEARL \cite{rakelly2019efficient} or MQL \cite{fakoor2019meta}, have demonstrated that a context improves the generalization abilities. Recently, \cite{jayakumar2019multiplicative} have shown that Multiplicative Interactions (MI) are an excellent design choice when combining states and contexts. MI operations can be viewed as shallow Hypernetwork architectures. In Sec. \ref{sec:recomposing_meta}, we further explore this approach and study context-based meta-policies with \textit{deep} Hypernetworks.  We find that with Hypernetworks, the task and state-dependent gradients are disentangled s.t. the state-dependent gradients are marginalized out, which leads to an empirically lower learning step variance. This is specifically important in on-policy methods such as MAML, where there are fewer optimization steps during training.

The contributions of this paper are three-fold. First, in Sec. \ref{sec:recomposing_actor_critic} we provide a theoretical link between the $Q$-function gradient approximation quality and the allowable learning rate for monotonic policy improvement. Next, we show empirically that Hypernetworks achieve better gradient approximations which translates into a faster learning rate and improves the final performance. Finally, in Sec. \ref{sec:recomposing_meta} we show that Hypernetworks significantly reduce the learning step variance in Meta-RL. We summarize our empirical results in Sec. \ref{sec:experiments}, which demonstrates the gain of Hypernetworks both in single-task RL and Meta-RL. Importantly, we find empirically that Hypernetwork policies eliminate the need for the MAML adaptation step and improve the Out-Of-Distribution generalization in PEARL.

\section{Hypernetworks}
\label{sec:Hypernetworks}

A Hypernetwork \cite{ha2016hypernetworks} is a neural-network architecture designed to process a tuple $(z, x) \in Z \times X$ and output a value $y\in Y$. It is comprised of two networks, a {\em primary} network $w_{\theta}:Z\to \mathbb{R}^{n_w}$ which produces weights $w_{\theta}(z)$ for a {\em dynamic} network $f_{w_{\theta}(z)}:X\to Y$. Both networks are trained together, and the gradient flows through $f$ to the primary networks' weights $\theta$. During test time or inference, the primary weights are fixed while the $z$ input determines the dynamic network's weights.

The idea of learnable context-dependent weights can be traced back to \cite{mcclelland1985putting,schmidhuber1992learning}. However, only in recent years have Hypernetworks gained popularity when they have been applied successfully with many dynamic network models, e.g. recurrent networks \cite{ha2016hypernetworks}, MLP networks for 3D point clouds  \cite{littwin2019deep}, spatial transformation \cite{potapov2018hypernets}, convolutional networks for video frame prediction \cite{jia2016dynamic} and few-shot learning \cite{brock2018smash}. In the context of RL, Hypernetworks were also applied, e.g., in QMIX \cite{rashid2018qmix} to solve Multi-agent RL tasks and for continual model-based RL \cite{huang2020continual}. 

Fig. \ref{fig:HyperSceme} illustrates our Hypernetwork model. The primary network $w_{\theta}(z)$ contains residual blocks \cite{srivastava2015training} which transform the meta-variable into a 1024 sized latent representation. This stage is followed by a series of parallel linear transformations, termed ``heads", which output the sets of dynamic weights. The dynamic network $f_{w_{\theta}(z)}(x)$ contains only a single hidden layer of 256 which is smaller than the standard MLP architecture used in many RL papers \cite{fujimoto2018addressing,haarnoja2018soft} of 2 hidden layers, each with 256 neurons. The computational model of each dynamic layer is
\begin{equation}
    x^{l+1} = \sigma_{ReLU}\left((1 + g^l(z)) \odot x^l W^l(z) + b^l(z) \right)
\end{equation}
where the non-linearity is applied only over the hidden layer and $g^l$ is an additional gain parameter that is required in Hypernetwork architectures \cite{littwin2019deep}. We defer the discussion of these design choices to Sec. \ref{sec:experiments}.

%m
\section{Recomposing the Actor-Critic's $Q$-Function}
\label{sec:recomposing_actor_critic}

\subsection{Background}

Reinforcement Learning concerns finding optimal policies in Markov Decision Processes (MDPs). An MDP \cite{dean1997model} is defined by a tuple $(\mathcal{S},\mathcal{A},\mathcal{P},R)$ where $\mathcal{S}$ is a set of states, $\mathcal{A}$ is a set of actions, $\mathcal{P}$ is a set of probabilities to switch from a state $s$ to $s'$ given an action $a$, and $R:\mathcal{S}\times\mathcal{A}\to\mathbb{R}$ is a scalar reward function. The objective is to maximize the expected discounted sum of rewards with a discount factor $\gamma > 0$
\begin{equation}
\label{eq:mdp_objective}
    J(\pi) = \mathbb{E}\left[\sum_{t=0}^{\infty} \gamma^t R(s_t, a_t) \middle | a_t\sim\pi(\cdot|s_t) \right].
\end{equation}
$J(\pi)$ can also be written, up to a constant factor $1-\gamma$, as an expectation over the $Q$-function
\begin{equation}
\label{eq:mdp_objective_q}
    J(\pi) = \mathbb{E}_{s\sim d^{\pi}}\left[\mathbb{E}_{a\sim \pi(\cdot|s)}\left[ Q^{\pi}(s,a)\right]\right],
\end{equation}
where the $Q$-function is the expected discounted sum of rewards following visitation at state $s$ and execution of action $a$ \cite{sutton2018reinforcement}, and $d^{\pi}$ is the state distribution induced by policy $\pi$.

Actor-critic methods maximize $J(\pi)$ over the space of parameterized policies. Stochastic policies are constructed as a state dependent transformation of an independent random variable
\begin{equation}
    \pi_{\phi}(a|s) = \mu_{\phi}(\varepsilon| s) \text{s.t.} \varepsilon\sim p_{\varepsilon},
\end{equation}
where $p_{\varepsilon}$ is a predefined multivariate distribution over $\mathbb{R}^{n_a}$ and $n_a$ is the number of actions.\footnote{Deterministic policies, on the other hand, are commonly defined as a deterministic transformation of the state's feature vector.} To maximize $J(\pi_{\phi})$ over the $\phi$ parameters, actor-critic methods operate with an iterative three-phase algorithm. First, they collect into a replay buffer $\mathcal{D}$ the experience tuples $(s,a,r,s')$ generated with the parametric $\pi_{\phi}$ and some additive exploration noise policy \cite{zhang2017deeper}. Then they fit a critic which is a parametric model $Q_{\theta}^{\pi}$ for the $Q$-function. For that purpose, they apply TD-learning \cite{sutton2018reinforcement} with the loss function
\begin{multline*}
    \mathcal{L}_{critic}(\theta) = \\
    \mathbb{E}_{s,a,r,s'\sim\mathcal{D}}\left[\left|Q^{\pi}_{\theta}(s,a) - r - \gamma \mathbb{E}_{a'\sim\pi_{\phi}(\cdot|s')}[Q^{\pi}_{\bar{\theta}}(s',a')]\right|^2\right],
\end{multline*}
where $\bar{\theta}$ is a lagging set of parameters \cite{lillicrap2016continuous}. Finally, they apply gradient descent updates in the direction of an off-policy surrogate of $J(\pi_{\phi})$
\begin{equation}
\label{eq:j_actor}
\begin{aligned}
    & \phi\leftarrow \phi + \eta \nabla_{\phi} J_{actor}(\phi) \\
    & \nabla_{\phi} J_{actor}(\phi) = \mathbb{E}_{\bfrac{s \sim\mathcal{D}}{\varepsilon \sim p_{\varepsilon}}}\left[\nabla_{\phi} \mu_{\phi}(\varepsilon| s) \nabla_a Q^{\pi}_{\theta}(s,\mu_{\phi}(\varepsilon| s))\right].
\end{aligned}
\end{equation}
Here, $\nabla_{\phi} \mu_{\phi}(\varepsilon| s)$ is a matrix of size $n_{\phi}\times n_a$ where $n_{\phi}$ is the number of policy parameters to be optimized.

Two well-known off-policy algorithms are TD3 \cite{fujimoto2018addressing} and SAC \cite{haarnoja2018soft}. TD3 optimizes deterministic policies with additive normal exploration noise and double $Q$-learning to improve the robustness of the critic part \cite{hasselt2016deep}. On the other hand, SAC adopts stochastic, normally distributed policies but it modifies the reward function to include a high entropy bonus $\tilde{R}(s,a) = R(s,a) + \alpha H(\pi(\cdot|s))$ which eliminates the need for exploration noise.

\subsection{Our Approach}

The gradient of the off-policy surrogate $\nabla_{\phi} J_{actor}(\phi)$ differs from the true gradient $\nabla_{\phi} J(\pi)$ in two elements: First, the distribution of states is the empirical distribution in the dataset and not the policy distribution $d^{\pi}$; and second, the $Q$-function gradient is estimated with the critic's parametric neural gradient $\nabla_{a} Q^{\pi}_{\theta} \simeq \nabla_{a} Q^{\pi}$. Avoiding a distribution mismatch is the motivation of many constrained policy improvement methods such as TRPO and PPO \cite{schulman2015trust, schulman2017proximal}. However, it requires very small and impractical steps. Thus, many off-policy algorithms ignore the distribution mismatch and seek to maximize only the empirical advantage
\begin{equation*}
    A(\phi',\phi) = \mathbb{E}_{s\sim\mathcal{D}}\left[\mathbb{E}_{a\sim\pi'}\left[Q^{\pi}(s,a)\right] - \mathbb{E}_{a\sim\pi}\left[Q^{\pi}(s,a)\right]\right].
\end{equation*}
In practice, a positive empirical advantage is associated with better policies and is required by monotonic policy improvement methods such as TRPO \cite{kakade2002approximately,schulman2015trust}. Yet, finding positive empirical advantage policies requires a good approximation of the gradient $\nabla_{a} Q^{\pi}$. The next proposition suggests that with a sufficiently accurate approximation, applying the gradient step as formulated in the actor update in Eq. (\ref{eq:j_actor}) yields positive empirical advantage policies.

\begin{prop}
Let $\pi(a|s) = \mu_{\phi}(\varepsilon|s)$ be a stochastic parametric policy with $\varepsilon \sim p_{\varepsilon}$, and $\mu_{\phi}(\cdot|s)$ a transformation with a Lipschitz continuous gradient and a Lipschitz constant $\kappa_{\mu}$. Assume that its $Q$-function $Q^{\pi}(s,a)$ has a Lipschitz continuous gradient in $a$, i.e. $|\nabla_a Q^{\pi}(s,a_1) - \nabla_a Q^{\pi}(s,a_2)| \leq \kappa_q \|a_1 - a_2\|$. Define the average gradient operator $\overline{\nabla}_{\phi} \cdot f = \mathbb{E}_{s\sim\mathcal{D}}\left[\mathbb{E}_{\varepsilon\sim p_{\varepsilon}}[\nabla_{\phi} \mu_{\phi}(\varepsilon| s) \cdot f(s, \mu_{\phi}(\varepsilon|s))]\right]$. If there exists a gradient estimation $g(s,a)$ and $0 < \alpha < 1$ s.t.
\begin{equation}
\label{eq:alpha_rmse}
    \| \overline{\nabla}_{\phi}\cdot g - \overline{\nabla}_{\phi} \cdot \nabla_{a} Q^{\pi} \| \leq \alpha \| \overline{\nabla}_{\phi} \cdot \nabla_{a} Q^{\pi} \|
\end{equation}
then the ascent step $\phi' \leftarrow \phi + \eta \overline{\nabla}_{\phi}\cdot g$ with $\eta \leq \frac{1}{\tilde{k}}\frac{1-\alpha}{(1 + \alpha)^2}$ yields a positive empirical advantage policy.
\end{prop}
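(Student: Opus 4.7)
The plan is to recognize the empirical advantage as a difference of values of the scalar function
$F(\phi) := \mathbb{E}_{s\sim\mathcal{D},\,\varepsilon\sim p_\varepsilon}\bigl[Q^{\pi}(s,\mu_\phi(\varepsilon|s))\bigr]$.
Under the reparameterization $a=\mu_\phi(\varepsilon|s)$, $\varepsilon\sim p_\varepsilon$, the two inner expectations in $A(\phi',\phi)$ collapse to $F(\phi')$ and $F(\phi)$ respectively, so $A(\phi',\phi)=F(\phi')-F(\phi)$. The task then reduces to showing that the ascent step $\phi'=\phi+\eta\,\overline{\nabla}_\phi\cdot g$ strictly increases $F$, which I would attack via the descent lemma for functions with a Lipschitz-continuous gradient.

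First I would show that $\nabla F$ is Lipschitz. By the chain rule, the integrand of $\nabla F(\phi)=\overline{\nabla}_\phi\cdot\nabla_a Q^{\pi}$ is $\nabla_\phi\mu_\phi(\varepsilon|s)\,\nabla_a Q^{\pi}(s,\mu_\phi(\varepsilon|s))$. For $\phi_1,\phi_2$, an add-and-subtract trick splits the difference of these integrands into $\bigl(\nabla_\phi\mu_{\phi_1}-\nabla_\phi\mu_{\phi_2}\bigr)\nabla_a Q^{\pi}(s,\mu_{\phi_1})$ plus $\nabla_\phi\mu_{\phi_2}\bigl(\nabla_a Q^{\pi}(s,\mu_{\phi_1})-\nabla_a Q^{\pi}(s,\mu_{\phi_2})\bigr)$. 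The first piece is bounded by $\kappa_\mu\|\phi_1-\phi_2\|$ times a uniform bound on $\|\nabla_a Q^{\pi}\|$; the second by $\kappa_q\|\nabla_\phi\mu\|_\infty\|\mu_{\phi_1}-\mu_{\phi_2}\|\le\kappa_q\|\nabla_\phi\mu\|_\infty^{2}\|\phi_1-\phi_2\|$. Taking expectations yields a combined Lipschitz constant $\tilde k=\tilde k(\kappa_\mu,\kappa_q,\|\nabla_a Q^{\pi}\|_\infty,\|\nabla_\phi\mu\|_\infty)$ for $\nabla F$.

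With this in hand, the descent lemma gives $F(\phi')-F(\phi)\ge\eta\langle u,v\rangle-\tfrac{\tilde k\eta^{2}}{2}\|v\|^{2}$, where $u:=\overline{\nabla}_\phi\cdot\nabla_a Q^{\pi}$ and $v:=\overline{\nabla}_\phi\cdot g$. Hypothesis~(\ref{eq:alpha_rmse}) is exactly $\|v-u\|\le\alpha\|u\|$, so from $\langle u,v\rangle=\|u\|^{2}+\langle u,v-u\rangle$ together with Cauchy--Schwarz one obtains $\langle u,v\rangle\ge(1-\alpha)\|u\|^{2}$, while the triangle inequality gives $\|v\|\le(1+\alpha)\|u\|$. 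Substituting, $F(\phi')-F(\phi)\ge\eta\|u\|^{2}\bigl[(1-\alpha)-\tfrac{\tilde k\eta}{2}(1+\alpha)^{2}\bigr]$, and the assumed bound $\eta\le\tfrac{1}{\tilde k}\tfrac{1-\alpha}{(1+\alpha)^{2}}$ forces the bracket to be at least $\tfrac{1-\alpha}{2}>0$, giving a positive empirical advantage.

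The principal obstacle is the first step: the proposition posits Lipschitz gradients for $\mu_\phi$ and for $Q^{\pi}(s,\cdot)$ but not explicit boundedness of $\|\nabla_a Q^{\pi}\|$ or of the Jacobian $\nabla_\phi\mu$, which are implicitly required to obtain a single Lipschitz constant $\tilde k$ for $\nabla F$. These bounds are presumably absorbed into $\tilde k$ or recovered from the Lipschitz-gradient hypotheses plus continuity on a bounded neighborhood of $\phi$; once $\tilde k$ is fixed, the remaining manipulations are routine first-order-optimization bookkeeping matching the claimed constants.
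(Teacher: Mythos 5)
Your proof is correct and takes essentially the same route as the paper's: you identify the empirical advantage with $J(\phi')-J(\phi)$, establish a Lipschitz constant for $\nabla_{\phi}J$ by the same add-and-subtract product argument (stated in the paper as a separate lemma on products of bounded Lipschitz maps), and then combine the descent lemma with the two bounds $\langle u,v\rangle \geq (1-\alpha)\|u\|^{2}$ and $\|v\|\leq(1+\alpha)\|u\|$ to obtain the learning-rate condition. The boundedness of $\nabla_{a}Q^{\pi}$ and $\nabla_{\phi}\mu_{\phi}$ that you flag as implicit is handled in the paper exactly as you anticipate: it is deduced in one line from the boundedness of $Q^{\pi}$ and of the action space together with the Lipschitz-gradient hypotheses, and absorbed into $\tilde{k}$ via the constants $\sigma_{q}$ and $\sigma_{\mu}$.
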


We define $\tilde{k}$ and provide the proof in the appendix. It follows that a positive empirical advantage can be guaranteed when the gradient of the $Q$-function is sufficiently accurate, and with better gradient models, i.e. smaller $\alpha$, one may apply larger ascent steps. However, instead of fitting the gradient, actor-critic algorithms favor modeling the $Q$-function and estimate the gradient with the parametric gradient of the model $\nabla_a Q^{\pi}_{\theta}$. It is not obvious whether better models for the $Q$-functions, with lower Mean-Squared-Error (MSE), provide better gradient estimation. A more direct approach could be to explicitly learn the gradient of the $Q$-function \cite{sarafian2020explicit,saremi2019approximating}; however, in this work, we choose to explore which architecture recovers more accurate gradient approximation based on the parametric gradient of the $Q$-function model. 

We consider three alternative models:
\begin{enumerate}
    \item MLP network, where state features $\xi(s)$ (possibly learnable) are concatenated into a single input of a multi-layer linear network.
    \item Action-State Hypernetwork (AS-Hyper) where the actions are the {\em meta} variable, input of the primary network $w$, and the state features are the {\em base} variable, input for the dynamic network $f$.
    \item State-Action Hypernetwork (SA-Hyper), which reverses the order of AS-Hyper.
\end{enumerate}

\vspace{-.4cm}
\begin{figure}[!ht]
\begin{center}
%\framebox[4.0in]{$\;$}
    \includegraphics[width=\linewidth]{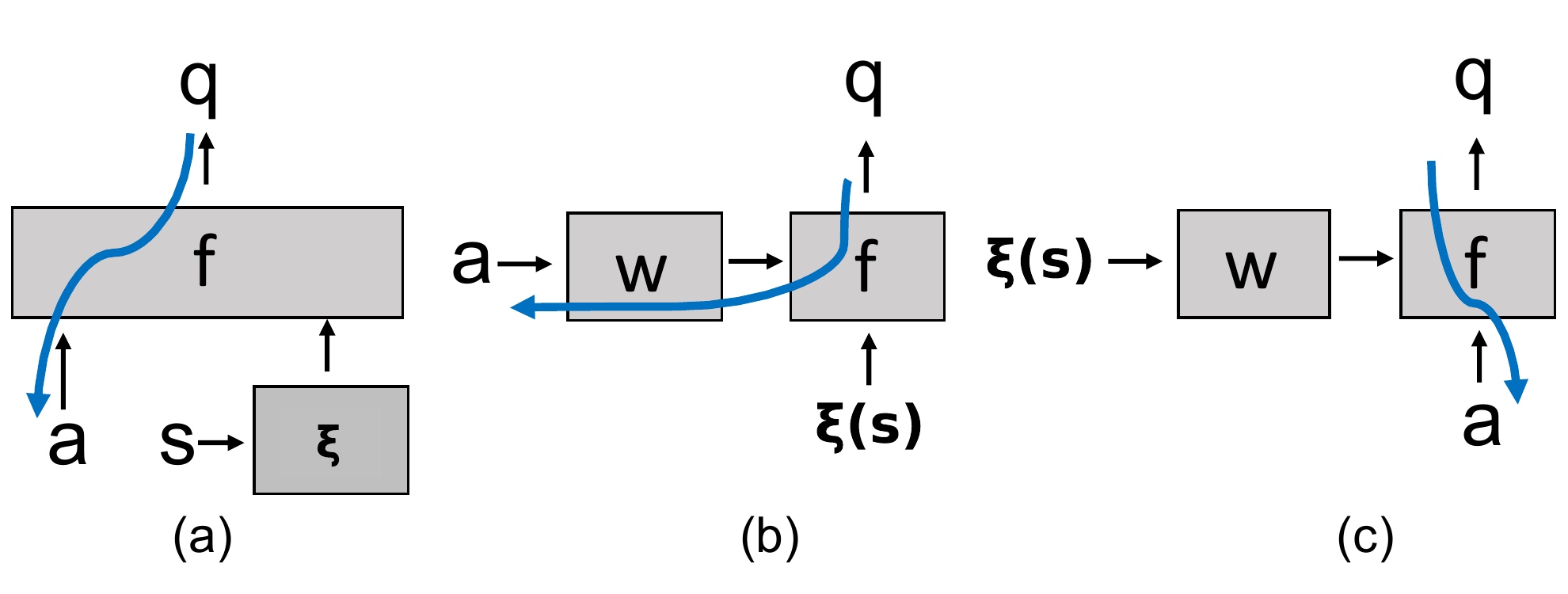}
\end{center}
\caption{Illustrating three alternatives for combining states and actions: (a) MLP; (b) AS-Hyper; and (c) SA-Hyper. The blue arrows represent the backpropagation calculation of the actions gradient. Notice that in the SA-Hyper, the gradient flows only through the dynamic network, which enables more efficient implementation as the dynamic network is much smaller than the primary network.}
\label{fig:sa_composition_alternatives}
\vspace{-.1cm}
\end{figure}

To develop some intuition, let us first consider the simplest case where the dynamic network has a single linear layer and the MLP model is replaced with a plain linear model. Starting with the linear model, the $Q$-function and its gradient take the following parametric model:
\begin{equation}
    \begin{aligned}
        Q^{\pi}_{\theta}(s,a) &= [w_s, w_{a}] \cdot [\xi(s), a] \\
        \nabla_a Q^{\pi}_{\theta}(s,a) &= w_a
    \end{aligned}
\end{equation}
where $\theta=[w_s, w_a]$. Clearly, in this case, the gradient is not a function of the state, therefore it is impossible to exploit this model for actor-critic algorithms. For the AS-Hyper we obtain the following model
\begin{equation}
    \begin{aligned}
        Q^{\pi}_{\theta}(s,a) &= w(a) \cdot \xi(s) \\
        \nabla_a Q^{\pi}_{\theta}(s,a) &= \nabla_a w(a) \xi(s)
    \end{aligned}
\end{equation}
Usually, the state feature vector $\xi(s)$ has a much larger dimension than the action dimension $n_a$. Thus, the matrix $\nabla_a w(a)$ has a large null-space which can potentially hamper the training as it may yield zero or near-zero gradients even when the true gradient exists. 

\begin{figure*}[!ht]
\begin{center}
%\framebox[4.0in]{$\;$}
    \includegraphics[width=\linewidth]{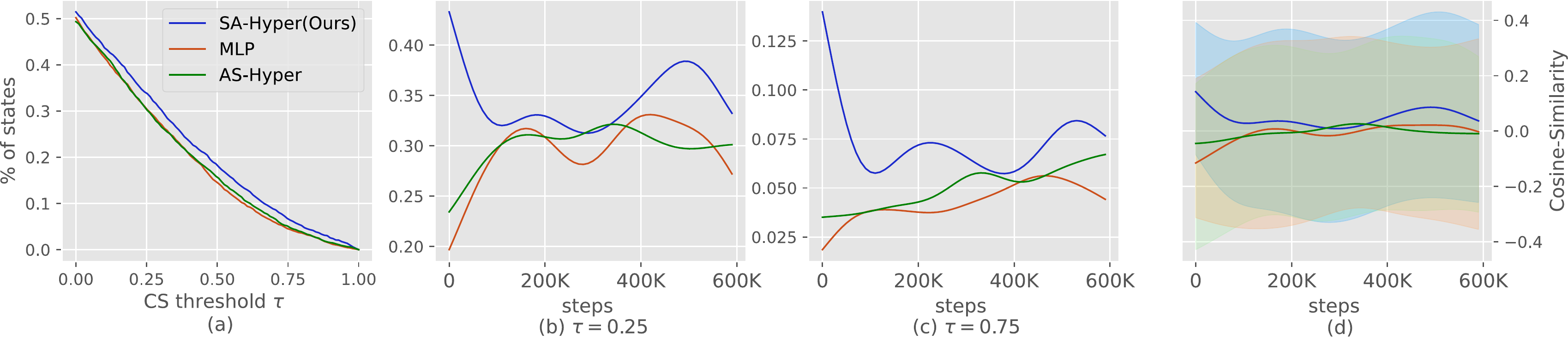}
\end{center}
\vspace{-.3cm}
\caption{Comparing the Cosine-Similarity of different critic models: (a) The percentage of states with CS better than a $\tau$ threshold. (b-c) The percentage of states with CS better than $\tau=0.25$ and $\tau=0.75$ with respect to the learning step. (d) The mean CS over time averaged over all seeds and environments. The shaded area is the interquartile range $Q3-Q1$. In all cases, the CS was evaluated every $10K$ steps with $N_s=15$ states and $N_r=15$ independent trajectories for each state.}
% \caption{The CS of different critic models averaged over 4 environments. The CS is evaluated every $10K$ steps with $N_s=15$ states and $N_r=15$ independent trajectories for each state.}
\label{fig:cs_10k}
\vspace{-.2cm}
\end{figure*}

On the other hand, the SA-Hyper formulation is
\begin{equation}
    \begin{aligned}
        Q^{\pi}_{\theta}(s,a) &= w(s) \cdot a \\
        \nabla_a Q^{\pi}_{\theta}(s,a) &= w(s)
    \end{aligned}
\end{equation}
which is a state-dependent constant model of the gradient in $a$. While it is a relatively naive model, it is sufficient for localized policies with low variance as it approximates the tangent hyperplane around the policy mean value.

Moving forward to a multi-layer architecture, let us first consider the AS-Hyper architecture. In this case the gradient is $\nabla_a Q^{\pi}_{\theta}(s,a) = \nabla_a w(a) \nabla_w f_{w}(s) $. We see that the problem of the single layer is exacerbated since $\nabla_a w(a)$ is now a $n_a \times n_w$ matrix where $n_w \gg n_a$ is the number of dynamic network weights. 

Next, the MLP and SA-Hyper models can be jointly analyzed. First, we calculate the input's gradient of each layer
\begin{align}
    x^{l+1} &= f^l(x^l) = \sigma\left(x^l W^l + b^l\right) \\
    \nabla_a x^{l+1} &= (\nabla_a x^{l}) \nabla_{x^l} f^l(x^l) = (\nabla_a x^{l})  W^l \Lambda^l(x^l) \\
    \Lambda^l(x^l) &= \diag\left(\sigma'\left(x^l W^l + b^l\right)\right),
\end{align}
where $\sigma$ is the activation function and $W^l$ and $b^l$ are the weights and biases of the $l$-th layer, respectively. By the chain rule, the input's gradient of an $L$-layers network is the product of these expressions. For the MLP model we obtain
\begin{equation}
    \nabla_a Q^{\pi}_{\theta}(s,a) = W^a \Lambda^1(s, a) \left(\prod_{l=2}^{L-1} W^l \Lambda^l(s, a)\right) W^L.
\end{equation}
On the other hand, in SA-Hyper the weights are the outputs of the primary network, thus we have
\begin{multline}
    \nabla_a Q^{\pi}_{\theta}(s,a) = \\ 
    W^1(s) \Lambda^1(s, a) \left(\prod_{l=2}^{L-1} W^l(s) \Lambda^l(s, a)\right) W^L(s) .
\end{multline}
Importantly, while the SA-Hyper's gradient configuration is controlled via the state-dependent matrices $W^l(s)$, in the MLP model, it is a function of the state only via the diagonal elements in $\Lambda^l(s, a)$. These local derivatives of the non-linear activation functions are usually piecewise constant when the activations take the form of ReLU-like functions. Also, they are required to be bounded and smaller than one in order to avoid exploding gradients during training \cite{philipp2017exploding}. These restrictions significantly reduce the expressiveness of the parametric gradient and its ability to model the true $Q$-function gradient. For example, with ReLU, for two different pairs $(s_1,a_1)$ and $(s_2,a_2)$ the estimated gradient is equal if they have same active neurons map (i.e. the same ReLUs are in the active mode). Following this line of reasoning, we postulate that the SA-Hyper configuration should have better gradient approximations.

\begin{figure*}[!ht]
\begin{center}
%\framebox[4.0in]{$\;$}
    \includegraphics[width=\linewidth]{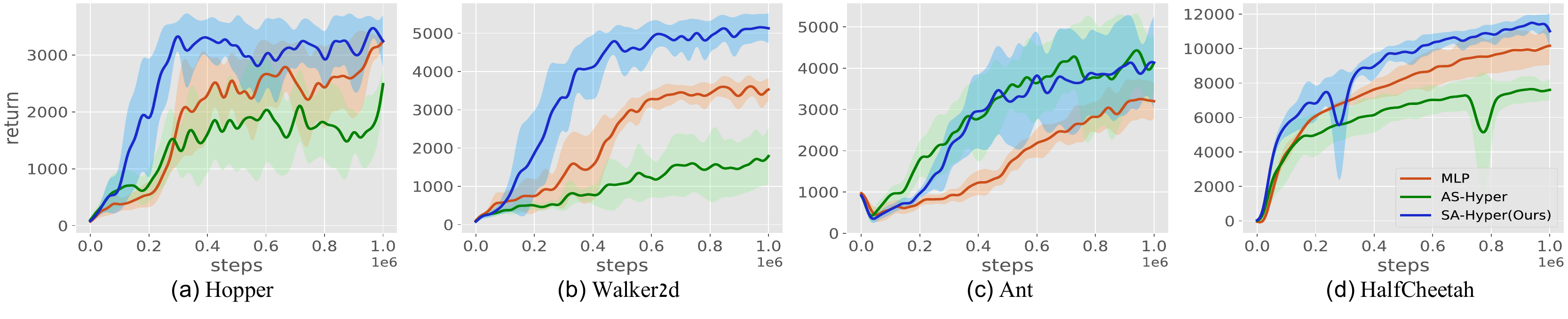}
\end{center}
\caption{Learning curves of the TD3 algorithm with different critic models. SA-Hyper refers to $Q^\pi_\theta=f_{w_\theta(s)}(a)$, AS-Hyper refers to $Q^\pi_\theta=f_{w_\theta(a)}(s)$ and MLP refers to  $Q^\pi_\theta = f_{\theta}(s,a)$, which concatenates both inputs.}
\label{fig:td3_order}
\vspace{-0.2cm}
\end{figure*}

\par{\bf Empirical analysis} To test our hypothesis, we trained TD3 agents with different network models and evaluated their parametric gradient $\nabla_a Q_{\theta}(s, a)$. To empirically analyze the gradient accuracy, we opted to estimate the true $Q$-function gradient with a non-parametric local estimator at the policy mean value, i.e. at $a_\mu=\mathbb{E}_{\varepsilon\sim p_{\varepsilon}} \left[\mu_{\phi}(\varepsilon| s)\right]$. For that purpose, we generated $N_r$ independent trajectories with actions sampled around the mean value, i.e. $a = a_\mu + \Delta_a$, and fit with a Least-Mean-Square (LMS) estimator a linear model for the empirical return of the sampled trajectories. The ``true" gradient is therefore the linear model's gradient. Additional technical details of this estimator are found in the appendix.

As our $Q$-function estimator is based on Temporal-Difference (TD) learning, it bears bias. Hence, in practice we cannot hope to reconstruct the true $Q$-function scale. Thus, instead of evaluating the gradient's MSE, we take the Cosine Similarity (CS) as a surrogate for measuring the gradient accuracy.
\begin{equation*}
    cs(Q^{\pi}_{\theta}) = \mathbb{E}_{s\sim\mathcal{D}}\left[\frac{\nabla_a Q_{\theta}^{\pi}(s,a_{\mu}) \cdot \nabla_a Q^{\pi}(s,a_{\mu})}{\|\nabla_a Q_{\theta}^{\pi}(s,a_{\mu})\| \ \| \nabla_a Q^{\pi}(s,a_{\mu})\|}\right],
\end{equation*}
Fig. \ref{fig:cs_10k} summarizes our CS evaluations with the three model alternatives averaged over 4 Mujoco \cite{todorov2012mujoco} environments. Fig. \ref{fig:cs_10k}d presents the mean CS over states during the training process. Generally, the CS is very low, which indicates that the RL training is far from optimal. While this finding is somewhat surprising, it corroborates the results in \cite{ilyas2019closer} which found near-zero CS in policy gradient algorithms. Nevertheless, note that the impact of the CS accuracy is cumulative as in each gradient ascent step the policy accumulates small improvements. This lets even near-zero gradient models improve over time. Overall, we find that the SA-Hyper CS is higher, and unlike other models, it is larger than zero during the entire training process. The SA-Hyper advantage is specifically significant at the first $100K$ learning steps, which indicates that SA-Hyper learns faster in the early learning stages. 

Assessing the gradient accuracy by the average CS can be somewhat confounded by states that have reached a local equilibrium during the training process. In these states the true gradient has zero magnitude s.t. the CS is ill-defined. For that purpose, in Fig. \ref{fig:cs_10k}a-c we measure the percentage of states with a CS higher than a threshold $\tau$. This indicates how many states are \textit{learnable} where more learnable states are attributed to a better gradient estimation. Fig. \ref{fig:cs_10k}a shows that for all thresholds $\tau \in [0,1]$ SA-Hyper has more learnable states, and Fig. \ref{fig:cs_10k}b-c present the change in learnable states for different $\tau$ during the training process. Here we also find that the SA-Hyper advantage is significant particularly at the first stage of training. Finally, Fig. \ref{fig:td3_order} demonstrates how gradient accuracy translates to better learning curves. As expected, we find that SA-Hyper outperforms both the MLP architecture and the AS-Hyper configuration which is also generally inferior to MLP.

In the next section, we discuss the application of Hypernetworks in Meta-RL for modeling context conditional policies. When such a context exists, it also serves as an input variable to the $Q$-function. In that case, when modeling the critic with a Hypernetwork, one may choose to use the context as a meta-variable or alternatively as a base variable. Importantly, when the context is the dynamic's input, the dynamic weights are fixed for each state, regardless of the task. In our PEARL experiments in Sec. \ref{sec:experiments} we always used the context as a base variable of the critic. We opted for this configuration since: (1) we found empirically that it is important for the generalization to have a constant set of weights for each state; and (2) As the PEARL context is learnable, we found that when the context gradient backpropagates through three networks (primary, dynamic and the context network), it hampers the training. Instead, as a base variable, the context's gradient backpropagates only via two networks as in the original PEARL implementation.

\begin{figure*}[!ht]
\begin{center}
%\framebox[4.0in]{$\;$}
    \includegraphics[width=1.\linewidth]{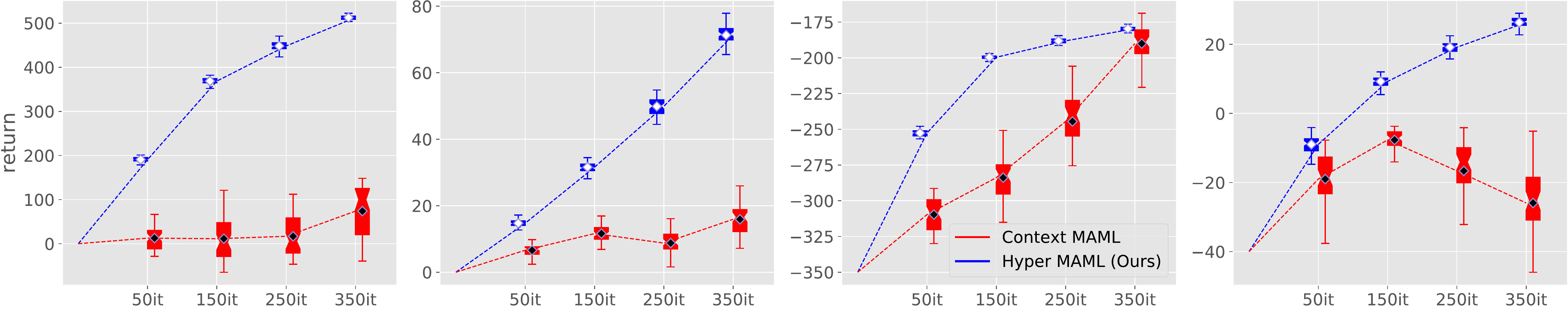}
\end{center}
\vspace{-0.2cm}
\caption{Visualizing gradient noise in MAML: The statistical population of the performance after 50 uncorrelated update steps is plotted for 4 different time steps. Hyper-MAML refers to Hypernetwork where the oracle-context is the meta variable and the state features are the base variable. Context-MAML refers to MLP policy where the oracle-context is concatenated with the state features.}
\label{fig:Meta_norm}
\vspace{-0.2cm}
\end{figure*}

\section{Recomposing the Policy in Meta-RL}
\label{sec:recomposing_meta}

\subsection{Background}

Meta-RL is the generalization of Meta-Learning \cite{mishra2018simple,sohn2019meta} to the RL domain. It aims at learning meta-policies that solve a distribution of different tasks $p(\mathcal{T})$. Instead of learning different policies for each task, the meta-policy shares weights between all tasks and thus can generalize from one task to the other \cite{sung2017learning}. A popular Meta-RL algorithm is MAML \cite{finn2017model}, which learns a set of weights that can quickly adapt to a new task with a few gradient ascent steps. To do so, for each task, it estimates the policy gradient \cite{sutton2000policy} at the adaptation point. The total gradient is the sum of policy gradients over the task distribution
$p(\mathcal{T})$:
\begin{equation}
\label{eq:maml}
\begin{aligned}
\nabla_{\phi} J_{maml}(\phi) &=  \mathbb{E}_{\bfrac{\mathcal{T}_i\sim p(\mathcal{T})}{\pi_{\phi_i}}}\left[\sum_{t=0}^{\infty}\hat{A}_{i,t}\nabla_{\phi} \log\pi_{\phi_i}(a_t|s_t)\right] \\
\phi_i &= \phi + \eta\mathbb{E}_{\pi_{\phi}}\left[\sum_{t=0}^{\infty}\hat{A}_{i,t}\nabla_{\phi}\log\pi_{\phi_i}(a_t|s_t)\right],
\end{aligned}
\end{equation}
where $\hat{A}_{i,t}$ is the empirical advantage estimation at the $t$-th step in task $i$ \cite{schulman2015high}. On-policy algorithms tend to suffer from high sample complexity as each update step requires many new trajectories sampled from the most recent policy in order to adequately evaluate the gradient direction. 

Off-policy methods are designed to improve the sample complexity by reusing experience from old policies \cite{thomas2016data}. Although not necessarily related, in Meta-RL, many off-policy algorithms also avoid the MAML approach of weight adaptation. Instead, they opt to condition the policy and the $Q$-function on a context which distinguishes between different tasks \cite{rencontext,sung2017learning}. A notable off-policy Meta-RL method is PEARL \cite{rakelly2019efficient}. It builds on top of the SAC algorithm and learns a $Q$-function $Q_{\theta}^{\pi}(s, a, z)$, a policy $\pi_{\phi}(s, z)$ and a context $z\sim q_{\nu}(z|c^{\mathcal{T}_i})$. The context, which is a latent representation of task $\mathcal{T}_i$, is generated by a probabilistic model that processes a trajectory $c^{\mathcal{T}_i}$ of $(s,a,r)$ transitions sampled from task $\mathcal{T}_i$. To learn the critic alongside the context, PEARL modifies the SAC critic loss to
\begin{multline*}
    \mathcal{L}_{pearl}^{critic}(\theta,\nu) = \\ \mathbb{E}_{\mathcal{T}}\left[\mathbb{E}_{q_{\nu}(z|c^{\mathcal{T}_i})}\left[\mathcal{L}_{sac}^{critic}(\theta,\nu) + D_{KL}\left(q_{\nu}(z|c^{\mathcal{T}_i})\middle| p(z)\right)\right]\right],
\end{multline*}
where $p(z)$ is a prior probability over the latent distribution of the context. While PEARL's context is a probabilistic model, other works \cite{fakoor2019meta} have suggested that a deterministic learnable context can provide similar results.

In this work, we consider both a learnable context and also the simpler approach of an {\em oracle-context} $c^{\mathcal{T}_i}$ which is a unique, predefined identifier for task $i$ \cite{jayakumar2019multiplicative}. It can be an index when there is a countable number of tasks or a continuous number when the tasks are sampled from a continuous distribution. In practice, the oracle identifier is often known to the agent. Moreover, sometimes, e.g., in goal-oriented tasks, the context cannot be recovered directly from the transition tuples without prior knowledge, since there are no rewards unless the goal is reached, which rarely happens without policy adaptation.

\subsection{Our Approach}

Hypernetworks naturally fit into the meta-learning formulation where the context is an input to the primary network \cite{von2019continual,zhao2020meta}. Therefore, we suggest modeling meta-policies s.t. the context is the meta variable and the state is the dynamic's input
\begin{equation}
\label{eq:meta_policy}
    \pi_{\phi}(a|s, c) = \mu_{w(c)}(\varepsilon|s) \ \ \text{s.t.} \ \ \varepsilon\sim p_{\varepsilon}.
\end{equation}

Interestingly, this modeling disentangles the state dependent gradient and the task dependent gradient of the meta-policy. To see that, let us take for example the on-policy objective of MAML and plug in a context dependent policy $\pi_{\phi}(a|s, c) = \mu_{\phi}(\varepsilon| s, c)$. Then, the objective in Eq. (\ref{eq:maml}) becomes
\begin{equation}
\label{eq:mlp_meta_objective}
J(\phi) = \sum_{\mathcal{T}_i} \sum_{s_j\in \mathcal{T}_i}\hat{A}_{i,j} \frac{\nabla_{\phi} \mu_{\phi_i}(\varepsilon_j| s_j, c_i)}{\mu_{\phi_i}(\varepsilon_j| s_j, c_i)}.
\end{equation}
Applying the Hypernetwork modeling of the meta-policy in Eq. (\ref{eq:meta_policy}), this objective can be written as
\begin{equation}
J(\phi) =
\sum_{\mathcal{T}_i} \nabla_{\phi}w(c_i) \cdot \sum_{s_j\in \mathcal{T}_i}\hat{A}_{i,j} \frac{\nabla_{w} \mu_{w(c_i)}(\varepsilon_j| s_j)}{\mu_{w(c_i)}(\varepsilon_j| s_j)}
\end{equation}
In this form, the state-dependent gradients of the dynamic weights $\nabla_{w} \mu_{w(c_i)}(\varepsilon_j, s_j)$ are averaged independently for each task, and the task-dependent gradients of the primary weights $\nabla_{\phi} w(c_i)$ are averaged only over the task distribution and not over the joint task-state distribution as in Eq. (\ref{eq:mlp_meta_objective}). We postulate that such disentanglement reduces the gradient noise for the same number of samples. This should translate to more accurate learning steps and thus a more efficient learning process.

To test our hypothesis, we trained two different meta-policy models based on the MAML algorithm: (1) an MLP model where a state and an oracle-context are joined together; and (2) a Hypernetwork model, as described, with an oracle-context as a meta-variable. Importantly, note that, other than the neural architecture, both algorithms are {\em identical}. For four different timestamps during the learning process, we constructed 50 different uncorrelated gradients from different episodes and evaluating the updated policy's performance. We take the performance statistics of the updated policies as a surrogate for the gradient noise. In Fig. \ref{fig:Meta_norm}, we plot the performance statistics of the updated meta-policies. We find that the variance of the Hypernetwork model is significantly lower than the MLP model across all tasks and environments. This indicates more efficient improvement and therefore we also observe that the mean value is consistently higher.

\section{Experiments}
\label{sec:experiments}

\subsection{Experimental Setup}
We conducted our experiments in the MuJoCo simulator \cite{todorov2012mujoco} and tested the algorithms on the benchmark environments available in OpenAI Gym \cite{openAI}. For single task RL, we evaluated our method on the: (1) Hooper-v2; (2) Walker2D-v2; (3) Ant-v2\footnotemark; and (4) Half-Cheetah-v2 environments. For meta-RL, we evaluated our method on the: (1) Half-Cheetah-Fwd-Back and (2) Ant-Fwd-Back, and on velocity tasks: (3) Half-Cheetah-Vel and (4) Ant-Vel as is done in \cite{rakelly2019efficient}. We also added the Half-Cheetah-Vel-Medium environment as presented in \cite{fakoor2019meta}, which tests out-of-distribution generalization abilities. For Context-MAML and Hyper-MAML we adopted the {\em oracle-context} as discussed in Sec. \ref{sec:recomposing_meta}. For the forward-backward tasks, we provided a binary indicator, and for the velocity tasks, we adopted a continuous context in the range $[0,3]$ that maps to the velocities in the training distribution.

In the RL experiments, we compared our model to SAC and TD3, and in Meta-RL, we compared to MAML and PEARL. We used the authors' official implementations (or open-source PyTorch \cite{ketkar2017introduction} implementation when the official one was not available) and the original baselines' hyperparameters, as well as strictly following each algorithm evaluation procedure. The Hypernetwork training was executed with the baseline loss s.t. we changed only the networks model and adjusted the learning rate to fit the different architecture. All experiments were averaged over 5 seeds. Further technical details are in the appendix.

 \subsection{The Hypernetwork Architecture}
 \label{subsec:The Hypernetwork Architecture}
Our Hypernetwork model is illustrated in Fig. \ref{fig:HyperSceme} and in Sec. \ref{sec:Hypernetworks}. When designing the Hypernetwork model, we did not search for the best performance model, rather we sought a proper comparison to the standard MLP architecture used in RL (denoted here as MLP-Standard). To that end, we used a smaller dynamic network than the MLP model (single hidden layer instead of two layers and the same number of neurons (256) in a layer). With this approach, we wish to show the gain of using dynamic weights with respect to a fixed set of weights in the MLP model. To emphasize the gain of the dynamic weights, we added an MLP-Small baseline with equal configuration to the dynamic model (one hidden layer with 256 neurons).

\begin{figure}[!ht]
\begin{center}
%\framebox[4.0in]{$\;$}
    \includegraphics[width=.96\linewidth]{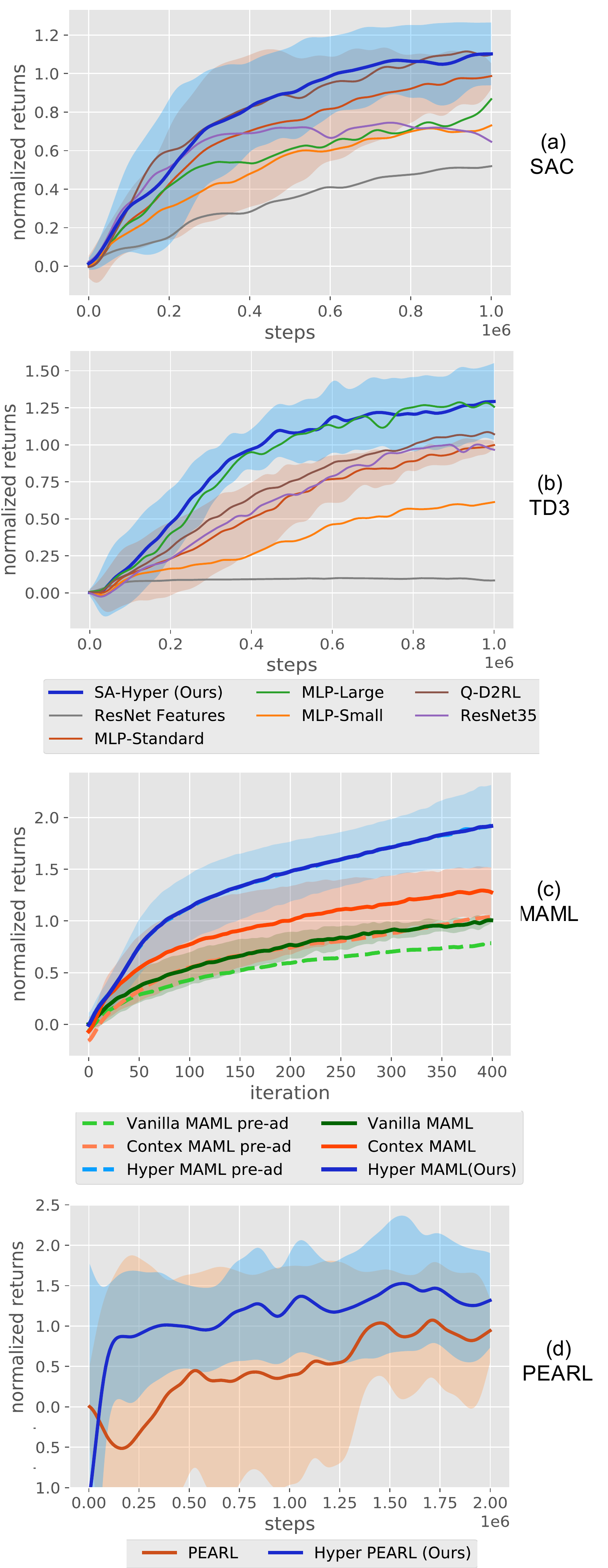}
\end{center}
\vspace{-.3cm}
\caption{The mean normalized score with respect to different baseline models: (a) SAC; (b) TD3; (c) MAML; and (d) PEARL. The Hypernetwork consistently improves all baselines in all algorithms.}
\label{fig:total_norm}
\end{figure}

Unlike the dynamic network, the role of the primary network is missing from the MLP architecture. Therefore, for the primary network, we used a high-performance ResNet model \cite{srivastava2015training} which we found apt for generating the set of dynamic weights \cite{glorot2010understanding}. To make sure that the performance gain is not due to the expressiveness of the ResNet model or the additional number of learnable weights, we added three more baselines: (1) ResNet Features: the same primary and dynamic architecture, but the output of the primary is a state feature vector which is concatenated to the action as the input for an MLP-Standard network; (2) MLP-Large: two hidden layers, each with 2900 neurons which sum up to $~9M$ weights as in the Hypernetwork architecture; and (3) Res35: ResNet with 35 blocks to yield the $Q$-value, which sum up to $~4.5M$ weights. In addition, we added a comparison to the Q-D2RL model: a deep dense architecture for the $Q$-function which was recently suggested in \cite{sinha2020d2rl}.

One important issue with Hypernetworks is their numerical stability. We found that they are specifically sensitive to weight initialization as bad primary initialization may amplify into catastrophic dynamic weights \cite{chang2019principled}. We solved this problem by initializing the primary s.t. the average initial distribution dynamic weights resembles the Kaiming-uniform initialization \cite{he2015delving}. Further details can be found in the appendix.

\footnotetext{We reduced the control cost as is done in PEARL \cite{rakelly2019efficient} to avoid numerical instability problems.}

\subsection{Results}

The results and the comparison to the baselines are summarized in Fig. \ref{fig:total_norm}. In all four experiments, our Hypernetwork model achieves an average of 10\% - 70\% gain over the MLP-Standard baseline in the final performance and reaches the baseline's score, with only 20\%-70\% of the total training steps. As described in Sec. \ref{subsec:The Hypernetwork Architecture}, for the RL experiments, in addition to the MLP-Standard model, we tested five more baselines: (1) MLP-Large; (2) MLP-Small; (3) ResNet Features; (4) ResNet35; and (5) Q-D2RL. Both on TD3 and SAC, we find a consistent improvement over all baselines and SA-Hyper outperforms in all environments with two exceptions: where MLP-Large or Q-D2RL achieve a better score than SA-Hyper in the Ant-v2 environment (the learning curves for each environment are found in the appendix). While it may seem like the Hypernetwork improvement is due to its large parametric dimension or the ResNet design of the primary model, our results provide strong evidence that this assumption is not true. The SA-Hyper model outperforms other models with the same number of parameters (MLP-Large and ResNet Features\footnote{Interestingly, The Resnet Features baseline achieved very low scores even as compared to the MLP-Standard baseline. Indeed, this result is not surprising as the action gradient model of Resnet Features is identical to the action gradient model of MLP-Small (single hidden layer with 256 neurons). While ResNet generated state features may improve the $Q$-function estimation, they do not necessarily improve the gradient estimation $\nabla_a Q^{\pi}$ as the network is not explicitly trained to model the gradient.}) and also models that employ ResNet architectures (ResNet Features and Res35). In addition, it is as good (SAC) or better (TD3) than Q-D2RL, which was recently suggested as an architecture tailored for the RL problem \cite{sinha2020d2rl}. Please note that as discussed in Sec.  \ref{subsec:The Hypernetwork Architecture} and unlike D2RL, we do not optimize the number of layers in the dynamic model.\footnote{We do not compare to the full D2RL model which also modifies the policy architecture as our SA-Hyper model only changes the $Q$-net model.}

In Fig. \ref{fig:total_norm}c we compared different models for MAML: (1) Vanilla-MAML; (2) Context-MAML, i.e. a context-based version of MAML with an oracle-context; and (3) Hyper-MAML, similar to context-MAML but with a Hypernetwork model. For all models, we evaluated both the pre-adaptation (pre-ad) as well as the post-adaptation scores. First, we verify the claim in \cite{fakoor2019meta} that context benefits Meta-RL algorithms just as Context-MAML outperforms Vanilla-MAML. However, we find that Hyper-MAML outperforms Context-MAML by roughly 50\%. Moreover, unlike the standard MLP models, we find that Hyper-MAML does not require any adaptation step (no observable difference between the pre- and post-adaptation scores). We assume that this result is due to the better generalization capabilities of the Hypernetwork architecture as can also be seen from the next PEARL experiments.

In Fig. \ref{fig:total_norm}d we evaluated the Hypernetwork model with the PEARL algorithm. The context is learned with a probabilistic encoder as presented in \cite{rakelly2019efficient} s.t. the only difference with the original PEARL is the policy and critic neural models. The empirical results show that Hyper-PEARL outperforms the MLP baseline both in the final performance (15\%) and in sample efficiency (70\% fewer steps to reach the final baseline score). Most importantly, we find that Hyper-PEARL generalizes better to the unseen test tasks. This applies both to test tasks sampled from the training distribution (as the higher score and lower variance of Hyper-PEARL indicate) and also to Out-Of-Distribution (OOD) tasks, as can be observed in Fig. \ref{fig:OOD-PEARL}.

\begin{figure}[!ht]
\begin{center}
%\framebox[4.0in]{$\;$}
    \includegraphics[width=\linewidth]{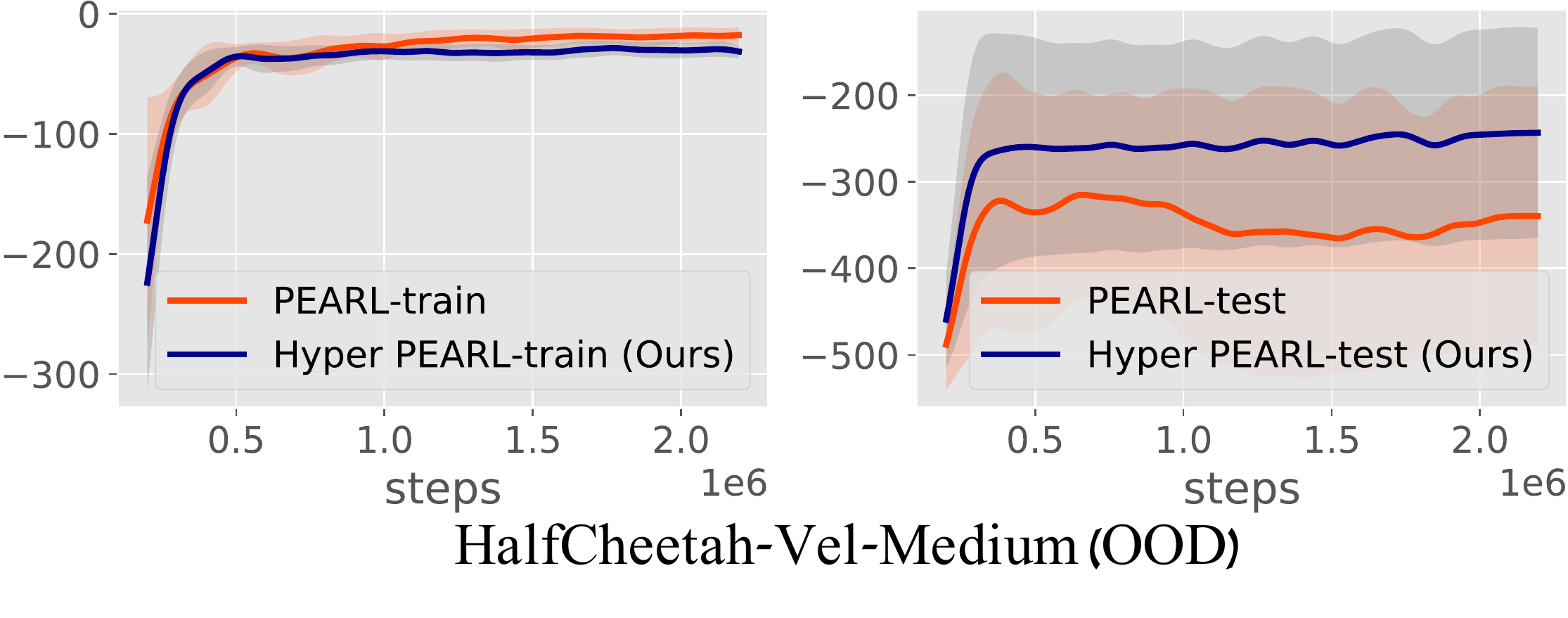}
\end{center}
\caption{PEARL results in an \textbf{Out Of Distribution} environment, HalfCheetah-Vel-Medium, where the training tasks' target is [0,2.5] and the test tasks' target is [2.5,3]. The Hypernetwork achieved slightly lower returns over the training tasks, yet it generalizes better over the OOD test tasks.}
\label{fig:OOD-PEARL}
\end{figure}

\section{Conclusions}
\label{sec:conclusions}

In this work, we set out to study neural models for the RL building blocks: $Q$-functions and meta-policies. Arguing that the unique nature of the RL setting requires unconventional models, we suggested the Hypernetwork model and showed empirically several significant advantages over MLP models. First, Hypernetworks are better able to estimate the parametric gradient signal of the $Q$-function required to train actor-critic algorithms. Second, they reduce the gradient variance in training meta-policies in Meta-RL. Finally, they improve OOD generalization and they do not require any adaptation step in Meta-RL training, which significantly facilitates the training process.

\section{Code}
Our Hypernetwork PyTorch implementation is found at \url{https://github.com/keynans/HypeRL}.

\newpage

\bibliography{mybib}
\bibliographystyle{apalike}

% Supplementary material: To improve readability, you must use a single-column format for the supplementary material.
\onecolumn
% \icmltitle{Recomposing the Reinforcement Learning Building-Blocks with Hypernetworks:
% Supplementary Materials}

\appendix

\section{Proof of Proposition 1}

\newtheorem{g_accuracy}{Proposition}

\begin{g_accuracy}
Let $\pi(a|s) = \mu_{\phi}(\varepsilon|s)$ be a stochastic parametric policy with $\varepsilon \sim p_{\varepsilon}$ and $\mu_{\phi}(\cdot|s)$ a transformation with a Lipschitz continuous gradient and a Lipschitz constant $\kappa_{\mu}$. Assume that its $Q$-function $Q^{\pi}(s,a)$ has a Lipschitz continuous gradient in $a$, i.e. $|\nabla_a Q^{\pi}(s,a_1) - \nabla_a Q^{\pi}(s,a_2)| \leq \kappa_q \|a_1 - a_2\|$. Define the average gradient operator $\overline{\nabla}_{\phi} f = \mathbb{E}_{s\sim\mathcal{D}}\left[\mathbb{E}_{\varepsilon\sim p_{\varepsilon}}[\nabla_{\phi} \mu_{\phi}(\varepsilon| s) \cdot f(s, \mu_{\phi}(\varepsilon|s))]\right]$. If there exists a gradient estimation $g(s,a)$ and $0 < \alpha < 1$ s.t.
\begin{equation}
\label{eq:alpha_rmse}
    \| \overline{\nabla}_{\phi}\cdot g - \overline{\nabla}_{\phi} \cdot \nabla_{a} Q^{\pi} \| \leq \alpha \| \overline{\nabla}_{\phi} \cdot \nabla_{a} Q^{\pi} \|
\end{equation}
then the ascent step $\phi' \leftarrow \phi + \eta \overline{\nabla}_{\phi}\cdot g$ with $\eta \leq \frac{1}{\tilde{k}}\frac{1-\alpha}{(1 + \alpha)^2}$ yields a positive empirical advantage policy.
\end{g_accuracy}

\begin{proof}

First, recall the objective to be optimized:
\begin{equation}
\begin{aligned}
    J(\phi) &= \mathbb{E}_{s\sim\mathcal{D}}\left[\mathbb{E}_{\varepsilon\sim p_{\varepsilon}}\left[ Q^{\pi}(s, \mu_{\phi}(\varepsilon; s)) \right]\right] \\
    \nabla_{\phi} J(\phi) &= \mathbb{E}_{s\sim\mathcal{D}}\left[\mathbb{E}_{\varepsilon\sim p_{\varepsilon}}\left[ \nabla_{\phi} \mu_{\phi}(\varepsilon; s) \cdot \nabla_a Q^{\pi}(s, \mu_{\phi}(\varepsilon; s)) \right]\right] = \overline{\nabla}_{\phi}\cdot \nabla_a Q^{\pi}
\end{aligned}
\end{equation}

Notice that as $Q^{\pi}$ is bounded by the maximal reward and its gradient is Lipschitz continuous, the gradient $\nabla_{a}Q^{\pi}$ is therefore bounded. Similarly, since the action space is bounded, and the terministic transformation $\mu_{\pi}$ has a Lipschitz continuous gradient, it follows that $\nabla_{\phi}\mu_{\pi}$ is also bounded. Define $\|\nabla_{a}Q^{\pi}\|\leq \sigma_q$ and $\|\nabla_{\phi}\mu_{\pi}\|\leq \sigma_{\mu}$.

\begin{lemma}
\label{lipschitz_sum}
Let $A(x): \mathbb{R}^n \to M^{k\times l}$ s.t. $\|A(x)\| \leq M_a$ and $\|A(x_1) - A(x_2)\| \leq \alpha \|x_1 - x_2\|$ and $\|\cdot\|$ is the induced vector norm.  And let $b(x): \mathbb{R}^n \to \mathbb{R}^l$ s.t. $\|b(x)\| \leq M_b$ and  $\|b(x_1) - b(x_2)\| \leq \beta \|x_1 - x_2\|$. The operator $c(x) = A(x) \cdot b(x):  \mathbb{R}^n \to \mathbb{R}^k$ is Lipschitz with constant $\kappa_c \leq \alpha M_a + \beta M_b$.
\end{lemma}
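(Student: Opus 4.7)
The plan is the standard add-and-subtract trick: for any $x_1,x_2\in\mathbb{R}^n$ I will insert the cross term $A(x_1)b(x_2)$ into the difference $c(x_1)-c(x_2)$ so that each resulting piece isolates the increment of one of the two factors while the other is merely bounded. Concretely, I would write
\begin{equation*}
c(x_1)-c(x_2) \;=\; A(x_1)b(x_1) - A(x_2)b(x_2) \;=\; A(x_1)\bigl(b(x_1)-b(x_2)\bigr) \;+\; \bigl(A(x_1)-A(x_2)\bigr)b(x_2).
\end{equation*}

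From there the bound follows in one line using the triangle inequality together with the submultiplicativity of the induced operator norm (i.e.\ $\|Mv\|\le\|M\|\,\|v\|$). The first term is controlled by $\|A(x_1)\|\le M_a$ and the Lipschitz bound $\|b(x_1)-b(x_2)\|\le\beta\|x_1-x_2\|$; the second term is controlled by $\|A(x_1)-A(x_2)\|\le\alpha\|x_1-x_2\|$ and $\|b(x_2)\|\le M_b$. Adding the two pieces gives
\begin{equation*}
\|c(x_1)-c(x_2)\| \;\le\; \bigl(\alpha M_b + \beta M_a\bigr)\|x_1-x_2\|,
\end{equation*}
which furnishes the Lipschitz constant asserted in the statement (I would note the natural pairing is $\alpha M_b+\beta M_a$; whichever labelling the authors adopt, the mechanism is the same).

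There is essentially no obstacle here: the only things to be careful about are (i) that the norm on $A$ is the operator norm induced by the vector norm on $b$ and $c$, so that submultiplicativity applies without an extra constant, and (ii) that I use the bound on $A$ and $b$ at the correct point (using $\|A(x_1)\|\le M_a$ rather than $\|A(x_2)\|$, and $\|b(x_2)\|\le M_b$ rather than $\|b(x_1)\|$) so that the two already-stated hypotheses suffice without any additional continuity argument. The lemma is then immediate, and can be invoked later to propagate Lipschitz continuity through products of bounded operator-valued and vector-valued maps, which is presumably the role it plays in the subsequent proof of Proposition 1.
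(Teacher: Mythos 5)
Your proof is correct and is essentially identical to the paper's: the same add-and-subtract decomposition $A(x_1)(b(x_1)-b(x_2)) + (A(x_1)-A(x_2))b(x_2)$, followed by the triangle inequality and submultiplicativity, yielding $\beta M_a + \alpha M_b$. You are also right that the pairing $\alpha M_b + \beta M_a$ is the one the argument actually produces --- the constant $\alpha M_a + \beta M_b$ written in the lemma statement is a transcription slip in the paper, and the paper's own proof derives the same constant you do.
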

\begin{proof}
\begin{equation*}
        \begin{aligned}
         \|c(x_1) - c(x_2)\| &= \| A(x_1) \cdot b(x_1) - A(x_2) \cdot b(x_2) \| \\
        &= \| A(x_1) \cdot b(x_1) - A(x_1) \cdot b(x_2) + A(x_1) \cdot b(x_2) - A(x_2) \cdot b(x_2)  \| \\
        &\leq \| A(x_1) \cdot (b(x_1) -  b(x_2)) \| + \| (A(x_1) - A(x_2)) \cdot b(x_2) \| \\
        &\leq \| A(x_1) \| \|b(x_1) -  b(x_2)\| + \|A(x_1) - A(x_2)\| \|b(x_2)\| \\
        &\leq \left(\beta M_a + \alpha M_b \right)  \|x_1 - x_2\|
    \end{aligned}
\end{equation*}

\end{proof}

The Lipschitz constant of the objective gradient is bounded by
\begin{multline*}
    \|\nabla_{\phi} J(\phi_1) - \nabla_{\phi} J(\phi_2)\| = \left\| \mathbb{E} \left[ \nabla_{\phi} Q^{\pi}(s, \mu_{\phi_1}(\varepsilon; s)) - \nabla_{\phi} Q^{\pi}(s, \mu_{\phi_2}(\varepsilon; s))\right] \right\| \leq \\
     \mathbb{E} \left[ \left\| \nabla_{\phi} \mu_{\phi_1}(\varepsilon; s) \cdot \nabla_a Q^{\pi}(s, \mu_{\phi_1}(\varepsilon; s)) - \nabla_{\phi} \mu_{\phi_2}(\varepsilon; s) \cdot \nabla_a  Q^{\pi}(s, \mu_{\phi_2}(\varepsilon; s))\right\| \right] 
\end{multline*}
Applying Lemma \ref{lipschitz_sum}, we obtain
\begin{equation*}
    \|\nabla_{\phi} J(\phi_1) - \nabla_{\phi} J(\phi_2)\| \leq (\kappa_q \sigma_{\mu} + \kappa_{\mu} \sigma_q) \|\phi_1 - \phi_2\|.
\end{equation*}
Therefore, $J(\phi_1)$ is also Lipschitz. Hence, applying Taylor's expansion around $\phi$, we have that
\begin{equation*}
    J(\phi') \geq J(\phi) + (\phi' - \phi) \cdot \nabla_{\phi}J(\phi) - \frac{\kappa_J^2}{2} \|\phi' - \phi\|^2 \geq J(\phi) + (\phi' - \phi) \cdot \nabla_{\phi}J(\phi) - \frac{(\kappa_q \sigma_{\mu} + \kappa_{\mu} \sigma_q)^2}{2} \|\phi' - \phi\|^2.
\end{equation*}
Plugging in the iteration $\phi' \leftarrow \phi + \eta \overline{\nabla}_{\phi}\cdot g$ we obtain
\begin{equation}
\label{eq:one_step}
     J(\phi') \geq J(\phi) + \eta \left(\overline{\nabla}_{\phi}\cdot g\right) \cdot \left(\overline{\nabla}_{\phi}\cdot Q^{\pi}\right) - \frac{\eta^2 (\kappa_q \sigma_{\mu} + \kappa_{\mu} \sigma_q)^2}{2} \|\overline{\nabla}_{\phi}\cdot g\|^2. 
\end{equation}

Taking the second term on the right-hand side,
\begin{equation*}
\begin{aligned}
          \left(\overline{\nabla}_{\phi}\cdot g\right) \cdot \left(\overline{\nabla}_{\phi}\cdot Q^{\pi}\right) &=\left(\overline{\nabla}_{\phi}\cdot Q^{\pi} - (\overline{\nabla}_{\phi}\cdot g - \overline{\nabla}_{\phi}\cdot Q^{\pi})\right) \cdot \left(\overline{\nabla}_{\phi}\cdot Q^{\pi}\right) \\
     &\geq \left\|\overline{\nabla}_{\phi}\cdot Q^{\pi}\right\|^2 - \left\|\left(\overline{\nabla}_{\phi}\cdot g - \overline{\nabla}_{\phi}\cdot Q^{\pi}\right) \cdot \left(\overline{\nabla}_{\phi}\cdot Q^{\pi}\right)\right\| \\
     &\geq \left\|\overline{\nabla}_{\phi}\cdot Q^{\pi}\right\|^2 - \left\|\overline{\nabla}_{\phi}\cdot g - \overline{\nabla}_{\phi}\cdot Q^{\pi}\right\| \cdot \left\| \overline{\nabla}_{\phi}\cdot Q^{\pi}\right\| \\
     &\geq (1 - \alpha )\left\|\overline{\nabla}_{\phi}\cdot Q^{\pi}\right\|^2.
\end{aligned}
\end{equation*}
For the last term we have
\begin{equation*}
\begin{aligned}
          \left\|\overline{\nabla}_{\phi}\cdot g\right\|^2  &=\left\|\overline{\nabla}_{\phi}\cdot Q^{\pi} - (\overline{\nabla}_{\phi}\cdot g - \overline{\nabla}_{\phi}\cdot Q^{\pi})\right\|^2  \\
          & = \left\|\overline{\nabla}_{\phi}\cdot Q^{\pi}\right\|^2 - 2 \left(\overline{\nabla}_{\phi}\cdot Q^{\pi}\right) \cdot (\overline{\nabla}_{\phi}\cdot g - \overline{\nabla}_{\phi}\cdot Q^{\pi})  + \left\|\overline{\nabla}_{\phi}\cdot g - \overline{\nabla}_{\phi}\cdot Q^{\pi}\right\|^2 \\
          & \leq \left\|\overline{\nabla}_{\phi}\cdot Q^{\pi}\right\|^2 + 2 \left\|\overline{\nabla}_{\phi}\cdot Q^{\pi}\right\| \cdot \left\| \overline{\nabla}_{\phi}\cdot g - \overline{\nabla}_{\phi}\cdot Q^{\pi}\right\|  + \alpha^2 \left\|\overline{\nabla}_{\phi}\cdot Q^{\pi}\right\|^2 \\
          & \leq (1 + 2 \alpha + \alpha^2) \left\|\overline{\nabla}_{\phi}\cdot Q^{\pi}\right\|^2. 
\end{aligned}
\end{equation*}
Plugging both terms together into Eq. (\ref{eq:one_step}) we get
\begin{equation*}
    J(\phi') \geq J(\phi) + \left\|\overline{\nabla}_{\phi}\cdot Q^{\pi}\right\|^2 \left( \eta (1 - \alpha ) - \frac{1}{2} \eta^2(\kappa_q \sigma_{\mu} + \kappa_{\mu} \sigma_q)^2(1 + \alpha)^2  \right). 
\end{equation*}

To obtain a positive empirical advantage we need
\begin{equation*}
    \eta (1 - \alpha ) - \frac{1}{2} \eta^2(\kappa_q \sigma_{\mu} + \kappa_{\mu} \sigma_q)^2(1 + \alpha)^2 \geq 0 
\end{equation*}
Thus the sufficient requirement for the learning rate is
\begin{equation*}
    \eta \leq \frac{1}{\tilde{k}}\frac{1-\alpha}{(1 + \alpha)^2}.
\end{equation*}
where $\tilde{k} = \frac{1}{2} (\kappa_q \sigma_{\mu} + \kappa_{\mu} \sigma_q)$.

\end{proof}

\newpage

\section{Cosine Similarity Estimation}
\label{sec:gradient}

To evaluate the averaged Cosine Similarity (CS)
\begin{equation}
    cs(Q^{\pi}_{\theta}) = \mathbb{E}_{s\sim\mathcal{D}}\left[\frac{\nabla_a Q_{\theta}^{\pi}(s,a_{\mu}) \cdot \nabla_a Q^{\pi}(s,a_{\mu})}{\|\nabla_a Q_{\theta}^{\pi}(s,a_{\mu})\| \ \| \nabla_a Q^{\pi}(s,a_{\mu})\|}\right],
\end{equation}
we need to estimate the local CS for each state. To that end, we estimate the ``true" $Q$-function $Q^{\pi}(s,a)$ at the vicinity of $a=a_{\mu}$ with a non-parametric local linear model
\begin{equation*}
Q^{\pi}(s,a) \simeq f_{s,a_{\mu}}(a) = a \cdot g 
\end{equation*}
where $g\in\mathbb{R}^{N_a}$ s.t. the $Q$-function gradient is constant $\nabla_a Q^{\pi}(s,a) \simeq g$. To fit the linear model, we sample $N_r$ unbiased samples of the $Q$-function around $a_{\mu}$, i.e. $q_i = \hat{Q}^{\pi}(s,a_i)$. These samples are the empirical discounted sum of rewards following execution of action $a_i = a_{\mu} + \Delta_i$ at state $s$ and then applying policy $\pi$.

To fit the linear model we directly fit the constant model $g$ for the gradient. Recall that applying the Taylor's expansion around $a_{\mu}$ gives
\begin{equation*}
Q^{\pi}(s,a) = Q^{\pi}(s,a_{\mu}) + (a - a_{\mu}) \cdot  \nabla_a Q_{\theta}^{\pi}(s,a_{\mu}) + O\left(\|a - a_{\mu}\|^2\right)
\end{equation*},
therefore
\begin{equation*}
Q^{\pi}(s,a_2) - Q^{\pi}(s,a_1) - (a_2 - a_1) \cdot \nabla_a Q_{\theta}^{\pi}(s,a_{\mu})  =  O\left(\|a_2 - a_1\|^2\right)
\end{equation*}
for $a_1,a_2$ at the vicinity of $a_{\mu}$.

To find the best fit $g\simeq \nabla_a Q_{\theta}^{\pi}(s,a_{\mu})$ we minimize averaged the quadratic error term over all pairs of sampled trajectories
\begin{equation*}
g^{*}=\arg\min_g \sum_{i}^{N_r} \sum_{j}^{N_r} |(a_j - a_i) \cdot g - q_j + q_i |^2 .
\end{equation*}
This problem can be expressed in a matrix notation as
\begin{equation*}
g^{*} = \arg\min_g \left\| \tilde{X} g - \delta \right\|^2,
\end{equation*}
where $\tilde{X}\in{R}^{N_r^2 \times N_a}$ is a matrix with $N_r^2$ rows of all the vectors $a_j-a_i$ and $\delta$ is a $N_r^2$ element vector of all the differences $q_j-q_i$. Its minimization is the Least-Mean-Squared Estimator (LMSE)
\begin{equation*}
g^{*} = (\tilde{X}^T\tilde{X})^{-1}\tilde{X}^T\delta.
\end{equation*}
 
In our experiments we evaluated the CS every $K=10^4$ learning steps and used $N_s=15$, $N_r=15$ and $\Delta_i\sim\mathcal{N}(0, 0.3)$ for each evaluation. This choice trades off somewhat less accurate local estimators with more samples during training. To test our gradient estimator, we first applied it to the outputs of the $Q$-function network (instead of the true returns) and calculated the CS between a linear model based on the network outputs and the network parametric gradient. The results in Fig. \ref{fig:cs_q} show that our $g^*$ estimator obtains a high CS between the $Q$-net outputs of the SA-Hyper and MLP models and their respective parametric gradients. This indicates that these networks are locally ($\Delta \propto 0.3$) linear. On the other hand, the CS between the linear model based on the AS-Hyper outputs and its parametric gradient is lower, which indicates that the network is not necessarily close to linear with $\Delta \propto 0.3$. We assume that this may be because the action in the AS-Hyper configuration plays the meta-variable role which increases the non-linearity of the model with respect to the action input. Importantly, note that this does not indicate that the true $Q$-function of the AS-Hyper model is more non-linear than other models.
\begin{figure*}[hbt!]
\begin{center}
%\framebox[4.0in]{$\;$}
    \includegraphics[width=1.\linewidth]{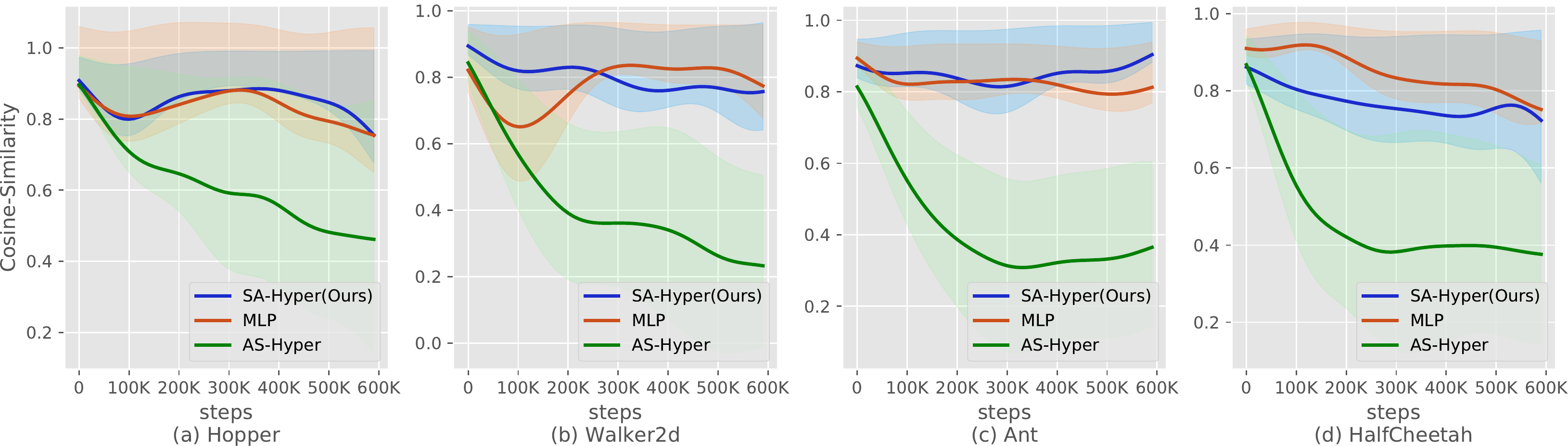}
\end{center}
\caption{The Cosine-Similarity between the LMSE estimator of the $Q$-net outputs and the parametric gradient averaged with a window size of $W=20$.}
\label{fig:cs_q}
\end{figure*}

In Fig. \ref{fig:cs_r} we plot the CS for 4 different environments averaged with a window size of $W=20$. The results show that on average the SA-Hyper configuration obtains a higher CS, which indicates that the policy optimization step is more accurate s.t. the RL training process is more efficient.
\begin{figure*}[hbt!]
\begin{center}
%\framebox[4.0in]{$\;$}
    \includegraphics[width=1.\linewidth]{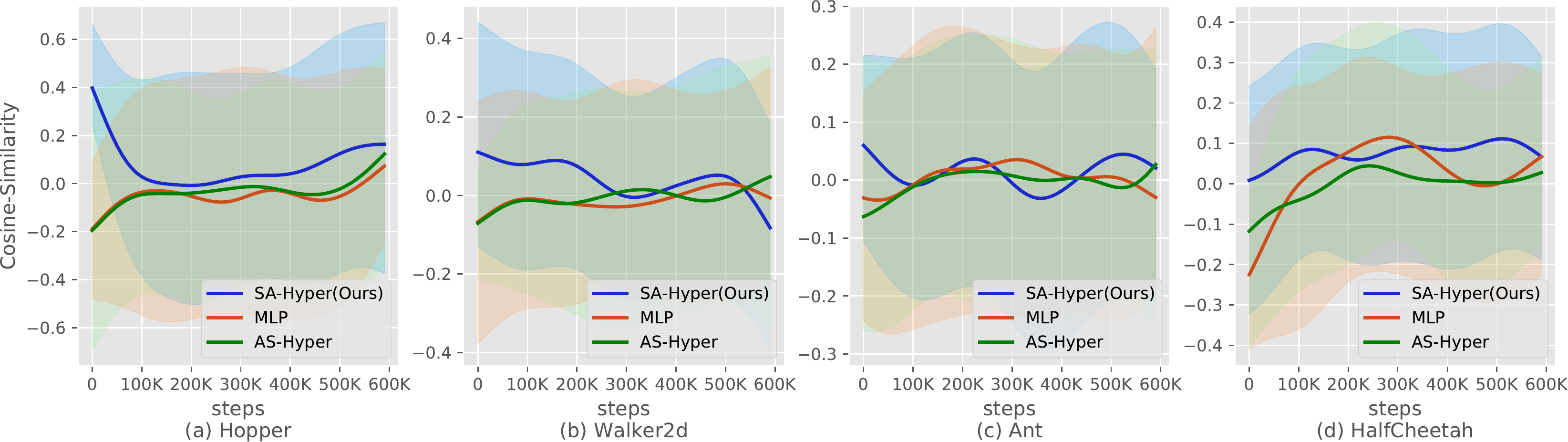}
\end{center}
\caption{The Cosine-Similarity between the LMSE estimator of the empirical sum of rewards and the parametric gradient averaged with a window size of $W=20$.}
\label{fig:cs_r}
\end{figure*}

\newpage
\ 
\newpage

\section{Gradient Step Noise Statistics in MAML}
Hypernetworks disentangle the state-dependent gradient and the task-dependent gradient. As explained in the paper, we hypothesized that these characteristics reduce the gradient ascent step noise during policy updates
\begin{equation*}
    \phi \leftarrow \phi - \eta \widehat{\nabla}_{\phi} J
\end{equation*}
where $\widehat{\nabla}_{\phi} J$ is the gradient step estimation and $\eta$ is the learning rate. It is not obvious how to define the gradient noise properly as any norm-based measure depends on the network's structure and size. Therefore, we take an alternative approach and define the gradient noise as the performance statistics after applying a set of independent gradient steps. In simple words, this definition essentially corresponds to how noisy the learning process is.

To estimate the performance statistics, we take $N=50$ different independent policy gradients based on independent trajectories at 4 different time steps during the training process. For each gradient step, we sampled 20 trajectories with a maximal length of 200 steps (identical to a single policy update during the training process) out of 40 tasks. After each gradient step, we evaluated the performance and restored the policy's weights s.t. the gradient steps are independent.

We compared two different network architectures, both with access to an oracle context: (1) Hyper-MAML; and (2) Context-MAML. We did not evaluate Vanilla-MAML as it has no context and the gradient noise, in this case, might also be due to higher adaptation noise as the context must be recovered from the trajectories' rewards. In the paper, we presented the performance statistics after $N=50$ different updates. In Table \ref{MAML_grad_table} we present the variance of those statistics.

\begin{table}[hbt!]
\caption{The gradient coefficient of variation  $\frac{\sigma}{|\mu|}$ and the variance (in brackets) in MAML. Hyper-MAML refers to Hypernetwork policy where the oracle-context is the meta-variable and the state features are the base-variable. Context-MAML refers to the MLP model policy where the oracle-context is concatenated with the state features. To compare between different policies with different reward scales, we report both the coefficient of variation and the variance in brackets.}
\label{MAML_grad_table}
\begin{center}
\begin{tabular}{lllll}
\multicolumn{1}{c}{\bf Envrionment}  &\multicolumn{1}{c}  {\bf 50 iter}  &\multicolumn{1}{c}{\bf 150 iter}    &\multicolumn{1}{c}{\bf 300 iter}    &\multicolumn{1}{c}{\bf 450 iter}
\\ \hline \\
HalfCheetah-Fwd-Back & & & &\\
\hline
Context-MAML &1.184 (774)           &4.492 (2595)             &2.590 (1891)                  &0.822 (3689)  \\
Hyper MAML (Ours)   &\textbf{0.027} (26)  &\textbf{0.017} (43)    &\textbf{0.021} (96)          &\textbf{0.014} (53) \\
\hline \\
HalfCheetah-Vel & & & &\\
\hline
Context-MAML &0.035 (122)           &0.050 (208)           &0.093 (520)                 &0.066 (161) \\
Hyper MAML (Ours)   &\textbf{0.009} (5)   &\textbf{0.005} (1)    &\textbf{0.008} (2)           &\textbf{0.009} (2) \\
\hline \\
Ant-Fwd-Back & & & &\\
\hline
Context-MAML &0.274 (3)           &0.199 (5)           &0.400 (12)                   &0.285 (20)\\
Hyper MAML (Ours)    &\textbf{0.073} (1)   &\textbf{0.047} (2)    &\textbf{0.050} (6)          &\textbf{0.047} (11) \\
\hline \\
Ant-Vel & & & &\\
\hline
Context-MAML &0.379 (52)          &0.377 (8)             &0.628 (109)                   &0.418 (117) \\
Hyper MAML (Ours)    &\textbf{0.252} (5)   &\textbf{0.159} (2)    &\textbf{0.080} (2)           &\textbf{0.057} (2) \\

\end{tabular}
\end{center}
\end{table}

\newpage

\section {Models Design}

\subsection{Hypernetwork Architecture}
\label{sec:compositions}

The Hypernetwork's primary part is composed of three main blocks followed by a set of heads. Each block contains an up-scaling linear layer followed by two pre-activation residual linear blocks (ReLU-linear-ReLU-linear). The first block up-scales from the state's dimension to 256 and the second and third blocks grow to 512 and 1024 neurons respectively. The total number of learnable parameters in the three blocks is $\sim6.5M$. The last block is followed by the heads which are a set of linear transformations that generate the $\sim2K$ dynamic parameters (including weights, biases and gains). The heads have $\sim2.5M$ learnable parameters s.t. the total number of parameters in the primary part is $\sim9M$.

\subsection{Primary Model Design: Negative Results}

In our search for a primary network that can learn to model the weights of a state-dependent dynamic $Q$-function, we experimented with several different architectures. Here we outline a list of negative results, i.e. models that failed to learn good primary networks.
\begin{enumerate}
    \item we tried three network architecture: (1) MLP; (2) Dense Blocks \cite{huang2017densely}; and (3) ResNet Blocks \cite{he2016deep}. The MLP did not converge and the dense blocks were sensitive to the initialization with spikes in the policy's gradient which led to an unstable learning process.
    \item We found that the head size (the last layer that outputs all the dynamic network weights) should not be smaller than 512 and the depth should be at least 5 blocks. Upsampling from the low state dimension can either be done gradually or at the first layer.
    \item We tried different normalization schemes: (1) weight normalization \cite{salimans2016weight}; (2) spectral normalization \cite{miyato2018spectral}; and (3) batch normalization \cite{ioffe2015batch}. All of them did not help and slowed or stopped the learning.
    \item For the non-linear activation functions, we tried RELU and ELU which we found to have similar performances.
\end{enumerate}

\subsection{Hypernetwork Initialization}

A proper initialization for the Hypernetwork is crucial for the network's numerical stability and its ability to learn. Common initialization methods are not necessarily suited for Hypernetworks \cite{chang2019principled} since they fail to generate the dynamic weights in the correct scale. We found that some RL algorithms are more affected than others by the initialization scheme, e.g, SAC is more sensitive than TD3. However, we leave this question of why some RL algorithms are more sensitive than others to the weight initialization for future research. 

To improve the Hypernetwork weight initialization, we followed \cite{hyper_bayesian} and initialized the primary weights with smaller than usual values s.t. the initial dynamic weights were also relatively small compared to standard initialization (Fig. \ref{fig:dynamic_init}). As is shown in Fig. \ref{fig:dynamic_init_100k}, this enables the dynamic weights to converge during the training process to a relatively similar distribution of a normal MLP network. 

The residual blocks in the primary part were initialized with a fan-in Kaiming uniform initialization \cite{he2015delving} with a gain of $\frac{1}{\sqrt{12}}$ (instead of the normal gain of $\sqrt{2}$ for the ReLU activation). We used fixed uniform distributions to initialize the weights in the heads: $U(-0.05, 0.05)$ for the first dynamic layer, $U(-0.008, 0.008)$ for the second dynamic layer and for the standard deviation output layer in the PEARL meta-policy we used the $U(-0.001, 0.001)$ distribution.

In Fig. \ref{fig:dynamic_init} and  Fig. \ref{fig:dynamic_init_100k} we plot the histogram of the TD3 critic dynamic network weights with different primary initializations: (1) our custom primary initialization; and (2) The default Pytorch initialization of the primary network. We compare the dynamic weights to the weights of a standard MLP-Small network (the same size as the dynamic network). We take two snapshots of the weight distribution: (1) in Fig. \ref{fig:dynamic_init} before the start of the training process; and (2) after $100K$ training steps. In Table \ref{init_table} we also report the total-variation distance between each initialization and the MLP-Small weight distribution. Interestingly, the results show that while the dynamic weight distribution with the Pytorch primary initialization is closer to the MLP-Small distribution at the beginning of the training process, after 100K training steps our primary initialized weights produce closer dynamic weight distribution to the MLP-Small network (also trained for $100K$ steps).

\begin{figure*}[h!]
\begin{center}
%\framebox[4.0in]{$\;$}
    \includegraphics[width=\linewidth]{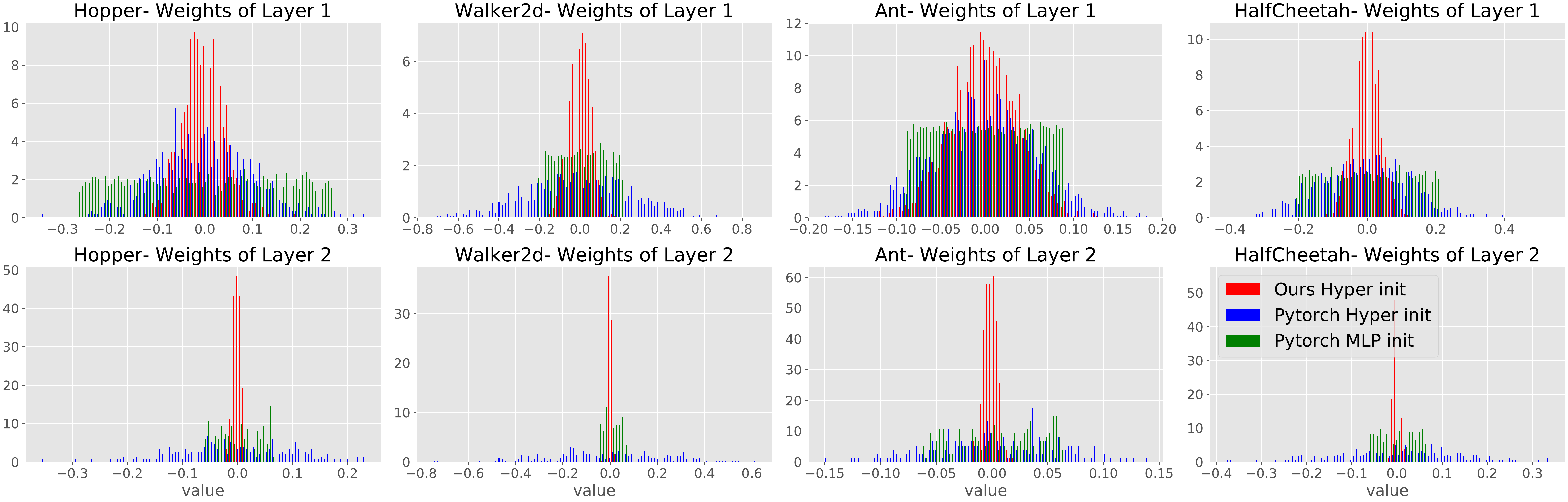}
\end{center}
\caption{Dynamic network weight distribution of different initialization schemes \textbf{at the beginning of the training}. The ``Hyper init" refers to a primary network initialized with our suggested initialization scheme. 'Pytorch Hyper init' refers to the Pytorch default initialization of the primary network and ``Pytorch MLP init" refers to the Pytorch default initialization of the MLP-Small model (same architecture as the dynamic network).}
\label{fig:dynamic_init}
\end{figure*}

\begin{figure*}[h!]
\begin{center}
%\framebox[4.0in]{$\;$}
    \includegraphics[width=\linewidth]{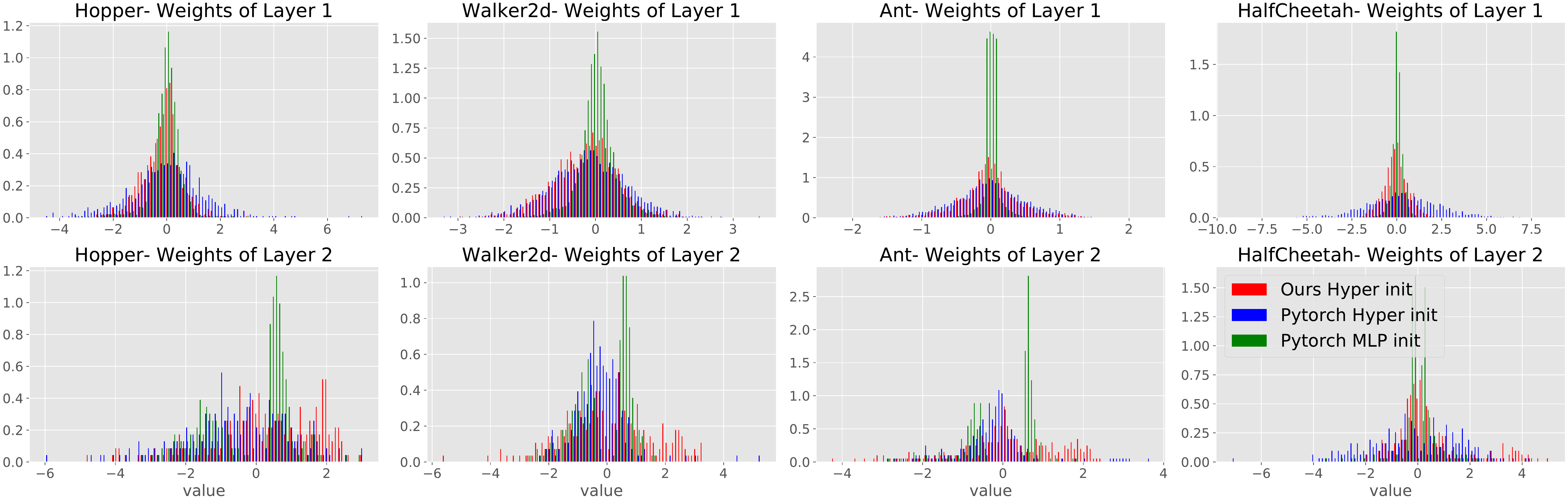}
\end{center}
\caption{Dynamic network weight distribution of different initialization schemes \textbf{after $\textbf{100K}$ training steps}. The ``Hyper init" refers to a primary network initialized with our suggested initialization scheme. 'Pytorch Hyper init' refers to the Pytorch default initialization of the primary network and 'Pytorch MLP init' weight distribution of the MLP-Small model (same architecture as the dynamic network).}
\label{fig:dynamic_init_100k}
\end{figure*}

\begin{table}[hbt!]
\caption{Total-variation distance between the dynamic weight distribution and the ``Pytorch MLP init" weight distribution: at the beginning of the training process we find that 'Pytorch Hyper init' is closer to the 'Pytorch MLP init' weight distribution while after $100K$ training steps we find that our initialization is closer to the 'Pytorch MLP init' weights (also trained for $100K$ steps).}
\label{init_table}
\begin{center}
\begin{tabular}{lcccc}
\multicolumn{1}{c}{\bf Primary Initialization Scheme}  &\multicolumn{1}{c}  {\bf Hopper}  &\multicolumn{1}{c}{\bf Walker2d}    &\multicolumn{1}{c}{\bf Ant}    &\multicolumn{1}{c}{\bf HalfCheetah}
\\ \hline \\
\textbf{First Layer} \\
\hline
Ours Hyper init  &31.4 &23.9 &13.6 &29.4\\
Pytorch Hyprer init &\textbf{16.3} &\textbf{20.5} &\textbf{9.2} &\textbf{8.8} \\
\hline \\
\textbf{Second Layer} \\
\hline
Ours Hyper init  &34.8 &\textbf{30.77} &37.7 &36.9\\
Pytorch Hyprer init &\textbf{24.7} &39.6 &\textbf{11.2} &\textbf{29.3} \\
\hline \\
\textbf{First Layer After 100K Steps} \\
\hline
Ours Hyper init &\textbf{14.4} &\textbf{19.6} &\textbf{29.9} &\textbf{16.4}\\
Pytorch Hyprer init &24.9 &22.6 &34.0 &22.4 \\
\hline \\
\textbf{Second Layer After 100K Steps} \\
\hline
Ours Hyper init &\textbf{31.2} &28.5 &\textbf{30.6} &\textbf{21.1}\\
Pytorch Hyprer init &32.11 &\textbf{20.8} &30.7 &31.1 \\

\end{tabular}
\end{center}
\end{table}

\subsection{Baseline Models for the SAC and TD3 algorithms}
In our TD3 and SAC experiments, we tested the Hypernetwork architecture with respect to 7 different baseline models.

\subsubsection{MLP-Standard}
A standard MLP architecture, which is used in many RL papers (e.g. SAC and TD3) with 2 hidden layers of 256 neurons each with ReLU activation function. 

\subsubsection{MLP-Small}
The MLP-Small model helps in understanding the gain of using context-dependent dynamic weights. It is an MLP network with the same architecture as our dynamic network model, i.e. 1 hidden layer with 256 neurons followed by a ReLU activation function. As expected, although the MLP-Small and MLP-Standard configurations are relatively similar with only a different number of hidden layers (1 and 2 respectively), the MLP-Small achieved close to half the return of the MLP-Standard. However, our experiments show that when using even a shallow MLP network with context-dependent weights (i.e. our SA-Hyper model), it can significantly outperform both shallow and deeper standard MLP models. 

\subsubsection{MLP-Large}
To make sure that the performance gain is not due to the large number of weights in the primary network, we evaluated MLP-Large, an MLP network with 2 hidden layers as the MLP-Standard but with 2,900 neurons in each layer. This yields a total number of $\sim9M$ learnable parameters, as in our entire primary model. While this large network usually outperformed other baselines, in almost all environments it still did not reach the Hypernetwork performance with one exception in the Ant-v2 environment in the TD3 algorithm. This provides another empirical argument that Hypernetworks are more suited for the RL problem and their performance gain is not only due to their larger parametric space. 

\subsubsection{ResNet Features}

To test whether the performance gain is due to the expressiveness of the ResNet model, we evaluated ResNet-Features: an MLP-Small model but instead of plugging in the raw state features, we use the primary model configuration (with ResNet blocks) to generate 10 learnable features of the state. Note that the feature extractor part of ResNet-Features has a similar parameter space as the Hypernetwork's primary model except for the head units. The ResNet-Features was unable to learn on most environments in both algorithms, even though we tried several different initialization schemes. This shows that the primary model is not suitable for a state's features extraction, and while it may be possible to find other models with ResNet that outperform this ResNet model, it is yet further evidence that the success of the Hypernetwork architecture is not attributed solely to the ResNet expressiveness power in the primary network.

\subsubsection{AS-Hyper}

This is the reverse configuration of our SA-Hyper model. In this configuration, the action is the meta-variable and the state serves as the base-variable. Its lower performance provides another empirical argument (alongside the lower CS, see Sec. \ref{sec:gradient}) that the ``correct" Hypernetwork composition is when the state plays the context role and the action is the base-variable.

\subsubsection{Emb-Hyper}

In this configuration, we replace the input of the primary network with a learnable embedding of size 5 (equal to the PEARL context size) and the dynamic part gets both the state and the action as its input variables. This produces a learnable set of weights that is constant for all states and actions. However, unlike MLP-Small, the weights are generated via the primary model and are not independent as in normal neural network training. Note that we did not include this experiment in the main paper but we have added it to the results in the appendix. This is another configuration that aims to validate that the Hypernetwork gain is not due to the over-parameterization of the primary model and that the disentanglement of the state and action is an important ingredient of the Hypernetwork performance.

\subsubsection{ResNet 35}
To validate that the performance gain is not due to a large number of weights in the primary network combined with the expressiveness of the residual blocks, we evaluated a full ResNet architecture: The state and actions are concatenated and followed by 35 ResNet blocks. Each block contains two linear layers of 256 size (and an identity path). This yields a a total number of $\sim4.5M$ learnable parameters, which is half of the $9M$ parameters in the Hypernetwork model. In almost all environments it underperformed both with respect to SA-Hyper and also with respect to the MLP-Standard baseline.

\subsubsection{Q-D2RL}
The Deep Dense architecture (D2RL) \cite{sinha2020d2rl} suggests to add skip connections from the input to each hidden layer. In the original paper this applies both to the $Q$-net model, where states and actions are concatenated and added to each hidden layer, and to policies where only states are added to each hidden layer. According to the paper, the best performing model contains 4 hidden layers. Here, we compared to Q-D2RL which only modifies the $Q$-net as our SA-Hyper model but does not alter the policy network. Q-D2RL shows an inconsistent performance between SAC and TD3. In the SAC algorithm, it performs close to the SA-Hyper in all environments. On the other hand, in the TD3 algorithm, Q-D2RL was unable to reach the SA-Hyper performance in any environment.

\subsection{Complexity and Run Time Considerations}

Modern deep learning packages such as Pytorch and Tensorflow currently do not have optimized implementation of Hypernetworks as opposed to conventional neural architectures such as CNN or MLP. Therefore, it is not surprising that the training of Hypernetwork can take a longer time than MLP models. However, remarkably, in MAML we were able to reduce the training time as the primary weights and gradients are calculated only once for each task and the dynamic network is smaller than the Vanilla-MAML MLP network. Therefore, within each task, both data collection and gradient calculation with the dynamic model requires less time than the Vanilla-MAML network. In Table \ref{run_time_table} we summarize the average training time of each algorithm and compare the Hyper and MLP configurations.

\begin{table}[hbt!]
\label{run_time_table}
\caption{Comparing the algorithms' average running time between Hyper and MLP models: Single iteration training time for the MAML algorithm and 5K steps training time for all other algorithms. Note that each agent was trained using a single NVIDIA® GeForce® RTX 2080 Ti GPU with a 11019 MiB memory.}
\label{run_time}
\begin{center}
\begin{tabular}{lcc}
\multicolumn{1}{c}{\bf Algorithm}  &\multicolumn{1}{c}  {\bf MLP}  &\multicolumn{1}{c}{\bf Hyper}
\\ \hline \\
SAC  &$120s$ &$200s$\\
TD3  &$40s$  &$140s$\\
PEARL             &$450s$ &$700s$\\
MAML            &$150s$ &$145s$\\
Multi-Task MAML &- &$120s$\\
\end{tabular}
\end{center}
\end{table}

\newpage

\section{Experiments}

In this section, we report the training results of all tested algorithms as well as the hyperparameters used in these experiments. For each algorithm, we plot the mean reward and standard deviation over five different seeds. The evaluation procedure of single task RL algorithms was done every $5K$ training steps, with a mean calculated over ten independent trajectory roll-outs, without exploration, as described in \cite{fujimoto2018addressing}. The evaluation procedure of the Meta-RL algorithms was done after every algorithm's iteration, with a mean calculated over all test tasks' roll-outs, as was done in \cite{rakelly2019efficient}. In 'Velocity' tasks in Meta-RL, we sample training and test tasks from $[0, 3]$ except for the HalfCheetah-Vel-Medium(OOD) environment which the training tasks sample from $[0, 2.5]$ and the test tasks sample from $[2.5, 3]$. We used $100$ training tasks and $30$ tests tasks for both algorithms (PEARL and MAML) on ``Velocity" tasks and $2$ tasks for the ``Direction" tasks, forward and backward.

\subsection{TD3}

\begin{figure*}[h!]
\begin{center}
%\framebox[4.0in]{$\;$}
    \includegraphics[width=\linewidth]{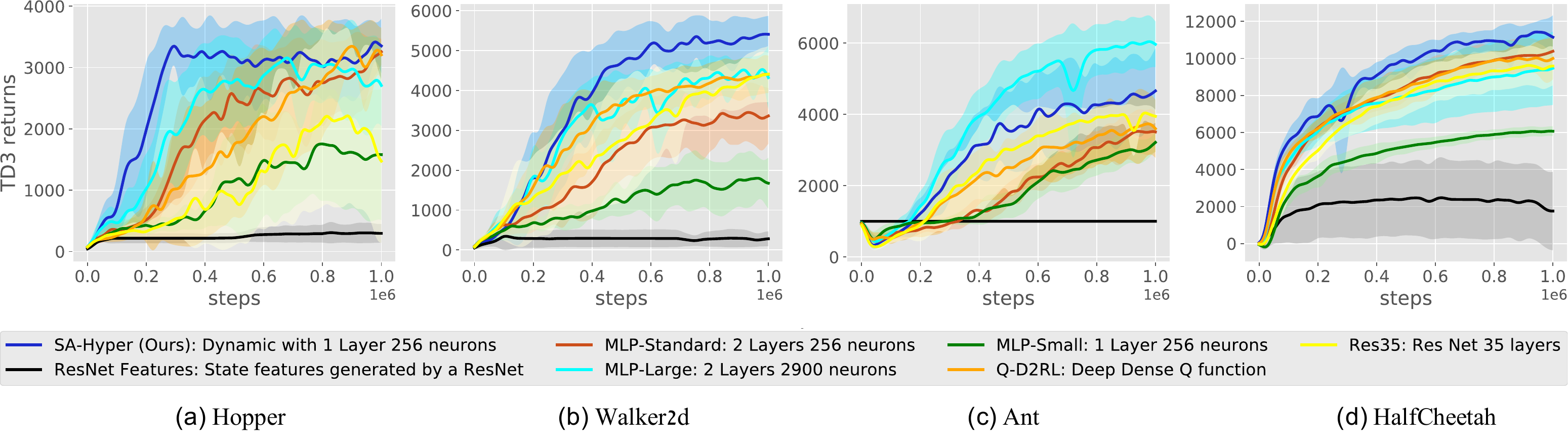}
\end{center}
\caption{TD3 performance of different MLP architectures compared to the SA-Hyper. SA-Hyper shows consistent high performance in all environments and outperforms all other architectures except for the Ant environment.}
\label{fig:TD3_architecture}
\end{figure*}

\begin{figure*}[hbt!]
\begin{center}
%\framebox[4.0in]{$\;$}
    \includegraphics[width=1.\linewidth]{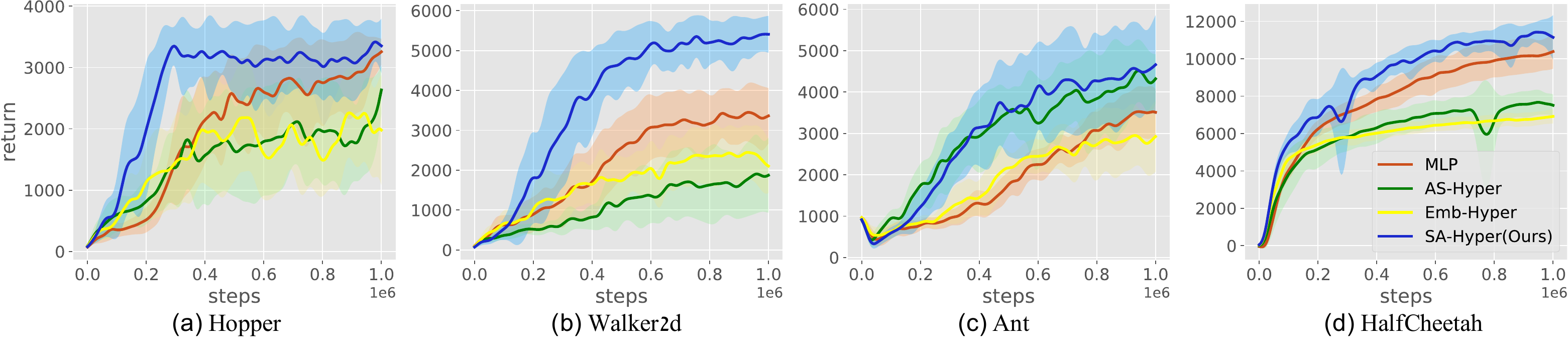}
\end{center}
\caption{TD3 Performance with Hypernetwork critic compared to MLP critic over different 'Mujoco' environments. In all environments, Hypernetwork outperforms all the baselines.}
\label{fig:TD3-emb}
\end{figure*}

\begin{table}[hbt!]
\caption{TD3 highest rewards.}
\label{tbl:TD3_highest_rewards}
\begin{center}
\begin{tabular}{lcccc}
\multicolumn{1}{c}{\bf Network}  &\multicolumn{1}{c}  {\bf Hopper}  &\multicolumn{1}{c}{\bf Walker2D}    &\multicolumn{1}{c}{\bf Ant}    &\multicolumn{1}{c}{\bf HalfCheetah}
\\ \hline \\
MLP-Standard  &$3256\pm211$       &$3449\pm730$             &$3524\pm617$           &$10384\pm923$ \\
MLP-Large   &$3156\pm368$       &$4527\pm397$             &$\textbf{6042}\pm\textbf{731}$           &$9467\pm1978$ \\
MLP-Small   &$1756\pm926$       &$1799\pm538m$             &$3215\pm267$           &$6071\pm256$ \\
ResNet-Features &$307\pm173$       &$343\pm349$             &$1001\pm1$           &$2474\pm2184$ \\
ResNet35 &$2213\pm1431$       &$4411\pm703$             &$4042\pm215$           &$9621\pm1072 $ \\
Q-D2RL &$3347\pm270$       &$4408\pm473$             &$3736\pm881$           &$10023\pm867$ \\
AS-Hyper    &$2633\pm391$       &$1905\pm985$             &$4513\pm759$           &$7669\pm667$ \\
Emb-Hyper    &$2261\pm728$       &$2446\pm676$             &$2949\pm741$           &$6915\pm374$ \\
SA-Hyper (Ours) &$\textbf{3418}\pm\textbf{318}$       &$\textbf{5412}\pm\textbf{445}$             &$4660\pm1194$           &$\textbf{11423}\pm\textbf{560}$ \\
\end{tabular}
\end{center}
\end{table}

\begin{table}[hbt!]
\caption{TD3 Hyper Parameters}
\label{TD3_hyper_parameters}
\begin{center}
\begin{tabular}{lcc}
\multicolumn{1}{c}{\bf Hyper-parameter} &\multicolumn{1}{c}{\bf TD3}    &\multicolumn{1}{c}{\bf Hyper TD3 (Ours)}
\\ \hline \\
Actor Learning Rate &$3e^{-4}$  &$3e^{-4}$ \\
Critic Learning Rate &$3e^{-4}$  &$5e^{-5}$  \\
Optimizer              &Adam  &Adam \\
Batch Size             &100  &100 \\
Policy update frequency      &2 &2 \\
Discount Factor            &0.99 &0.99 \\
Target critic update     &0.005 &0.005\\
Target policy update     &0.005 &0.005\\
Reward Scaling           &1  &1 \\
Exploration Policy        &$N(0,0.1)$  &$N(0,0.1)$ \\
\end{tabular}
\end{center}
\end{table}

\newpage
\
\newpage
\subsection{SAC}

\begin{figure*}[hbt!]
\begin{center}
%\framebox[4.0in]{$\;$}
    \includegraphics[width=\linewidth]{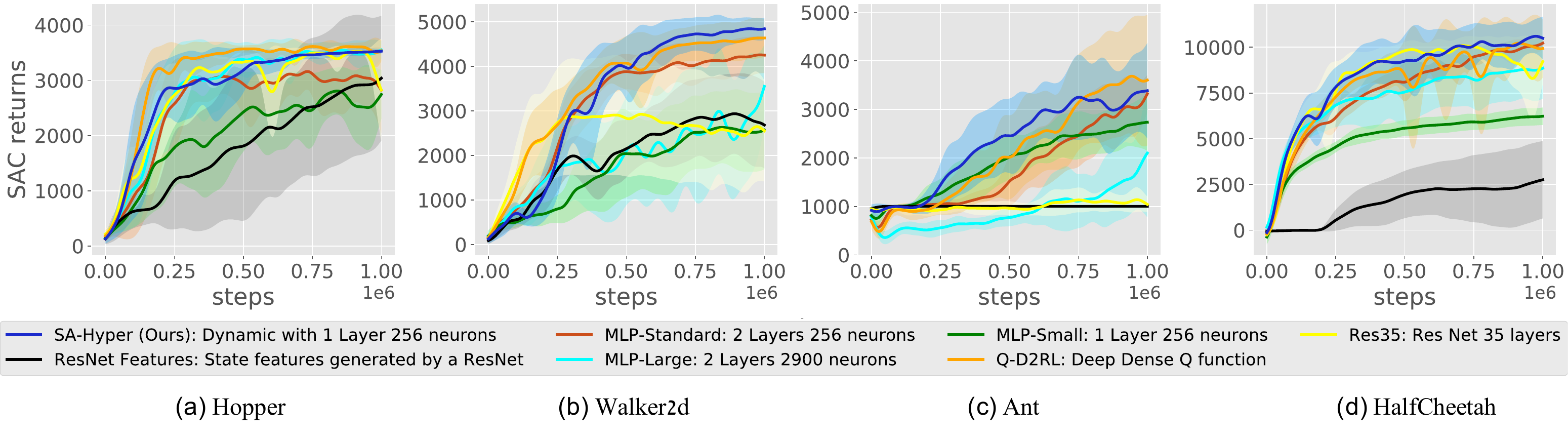}
\end{center}
\caption{SAC Performance of different critic models.}
\label{fig:fig:SAC_architecture}
\end{figure*}
\begin{table}[hbt!]
\caption{SAC highest rewards.}
\label{tbl:SAC_highest_reward}
\begin{center}
\begin{tabular}{lcccc}
\multicolumn{1}{c}{\bf Network}  &\multicolumn{1}{c}  {\bf Hopper}  &\multicolumn{1}{c}{\bf Walker2D}    &\multicolumn{1}{c}{\bf Ant}    &\multicolumn{1}{c}{\bf HalfCheetah}
\\ \hline \\
MLP-Standard  &$3160\pm327$       &$4258\pm413$             &$3323\pm389$           &$10225\pm324$ \\
MLP-Large   &$3549\pm160$       &$3550\pm936$             &$2100\pm1322$           &$8853\pm1663$ \\
MLP-Small   &$2806\pm425$       &$2629\pm804$             &$2735\pm589$           &$6229\pm475$ \\
ResNet-Features &$3038\pm1129$       &$2936\pm896$             &$1002\pm1$           &$2755\pm2114$ \\
ResNet35 &$3525\pm40$       &$2923\pm1369$             &$1138\pm252$           &$10096\pm468$ \\
Q-D2RL &$\textbf{3612}\pm\textbf{51}$       &$4638\pm441$             &$\textbf{3684}\pm\textbf{1207}$           &$10224\pm1090$ \\
SA-Hyper (Ours) &$3527\pm40$       &$\textbf{4844}\pm\textbf{254}$             &$3385\pm983$           &$\textbf{10600}\pm\textbf{950}$ \\
\end{tabular}
\end{center}
\end{table}

\begin{table}[hbt!]
\caption{SAC Hyper Parameters}
\label{single_hyper_parameters}
\begin{center}
\begin{tabular}{lcc}
\multicolumn{1}{c}{\bf Hyper-parameter}  &\multicolumn{1}{c}  {\bf SAC}  &\multicolumn{1}{c}{\bf Hyper SAC (Ours)}
\\ \hline \\
Actor Learning Rate &$3e^{-4}$ &$2e^{-5}$, $1e^{-4}$ for 'HalfCheetah' \ \\
Critic Learning Rate&$3e^{-4}$  &$5e^{-5}$ \\
Optimizer             &Adam &Adam \\
Batch Size            &256 &256  \\
Discount Factor            &0.99 &0.99 \\
Target critic update     &0.005 & 0.005\\
Reward Scaling           &5 &5\\
\end{tabular}
\end{center}
\end{table}

\newpage
\subsection{MAML}

\begin{figure*}[hbt!]
\begin{center}
%\framebox[4.0in]{$\;$}
    \includegraphics[width=1.\linewidth]{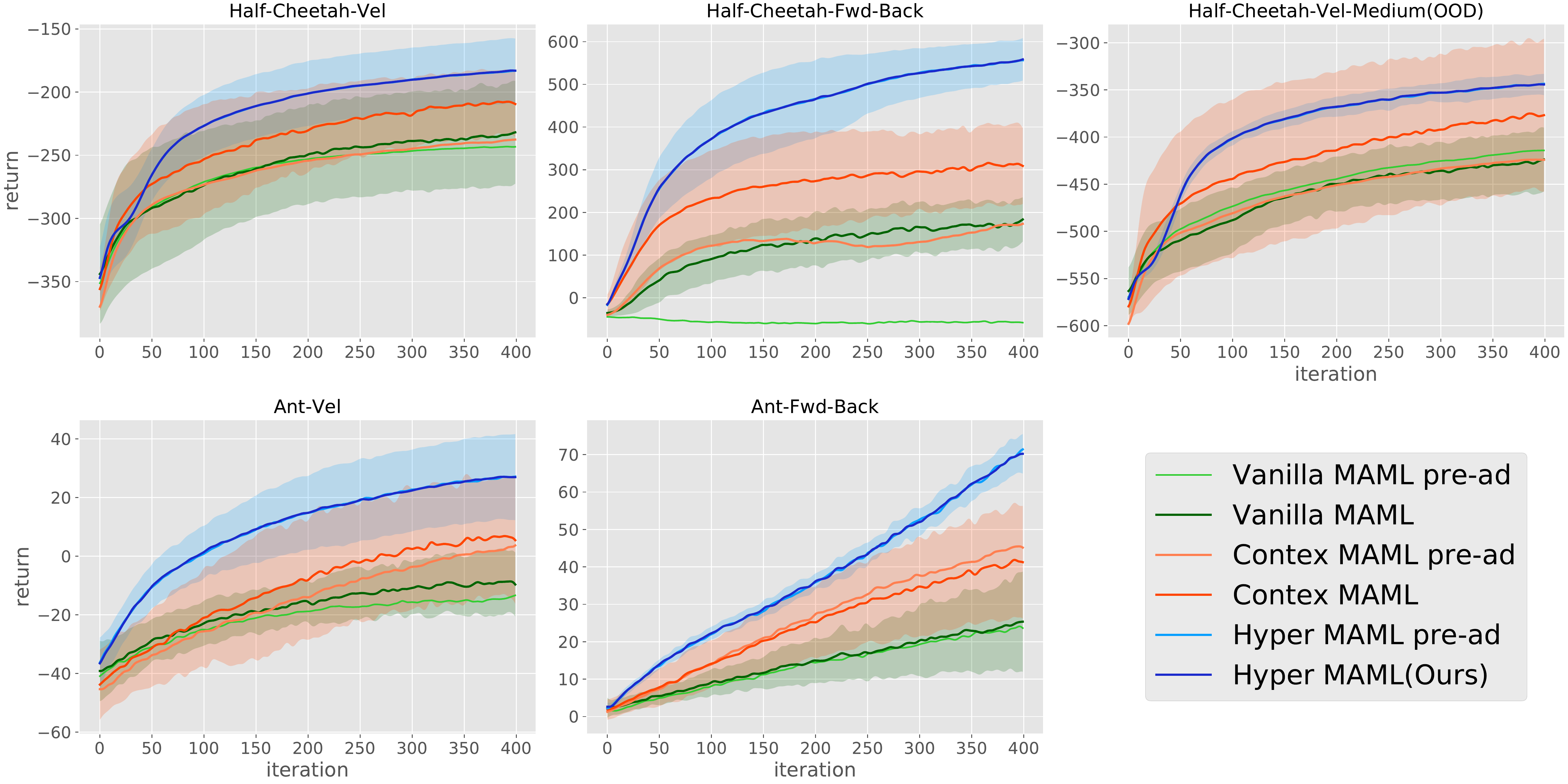}
\end{center}
\caption{MAML Performance over \textbf{test tasks} with a Hypernetwork policy compared to MLP policy with and without a given context of the tasks by an \textit{oracle}. The oracle-context improves the MAML performance but Hyper-MAML outperforms Context-MAML and, importantly, it does not require an adaptation step.}
\label{fig:MAML_test}
\end{figure*}

\begin{figure*}[hbt!]
\begin{center}
%\framebox[4.0in]{$\;$}
    \includegraphics[width=1.\linewidth]{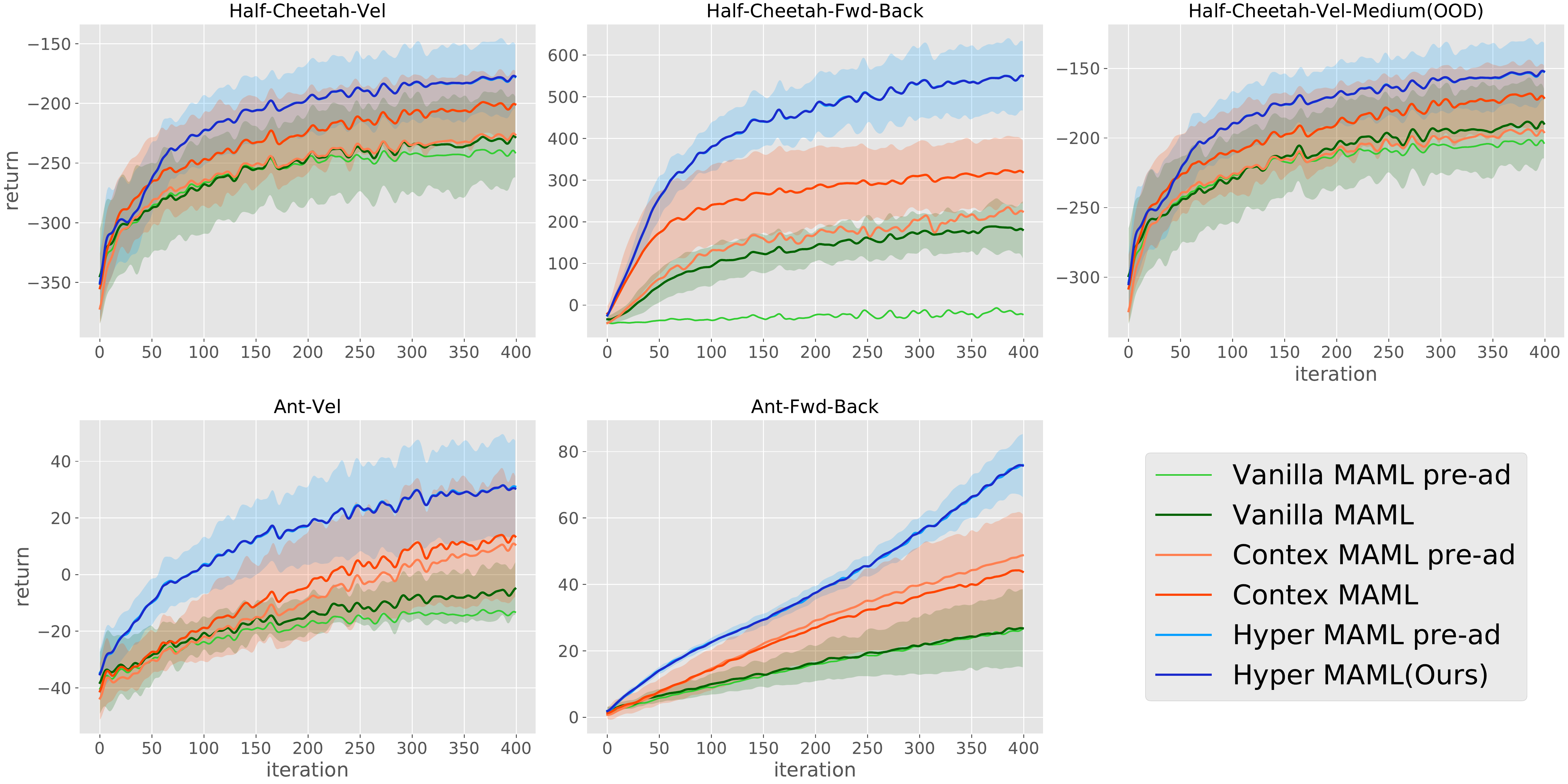}
\end{center}
\caption{MAML Performance over \textbf{training tasks} with a Hypernetwork policy compared to MLP policy with and without a given context of the tasks by an \textit{oracle}. The oracle-context improves the MAML performance but Hyper-MAML outperforms Context-MAML and, importantly, it does not require an adaptation step.}
\label{fig:MAML}
\end{figure*}

\subsubsection{Eliminating the adaptation step}

Our experiments show that taking the MAML adaptation step is unnecessary when using the Hyper-MAML model with an oracle-context (as opposed to Context-MAML which uses an oracle-context but still benefits from the adaptation step). We further investigate whether we can also eliminate the adaptation step during training s.t. the gradient of each task is calculated with the policy current weights as opposed to MAML which calculates the gradient at the policy's adapted weights. We term this method as Multi-Task Hyper-MAML (following \cite{fakoor2019meta} which termed the Meta-RL objective without adaptation as a multi-task objective). In Fig. \ref{fig:MAML_Multi} we find that Multi-Task Hyper-MAML outperforms the Hyper-MAML with adaptation. Moreover, Table \ref{run_time_table} shows that it also requires less training time as it removes the unnecessary complexity of the MAML adaptation training.

\begin{figure*}[hbt!]
\begin{center}
%\framebox[4.0in]{$\;$}
    \includegraphics[width=1.\linewidth]{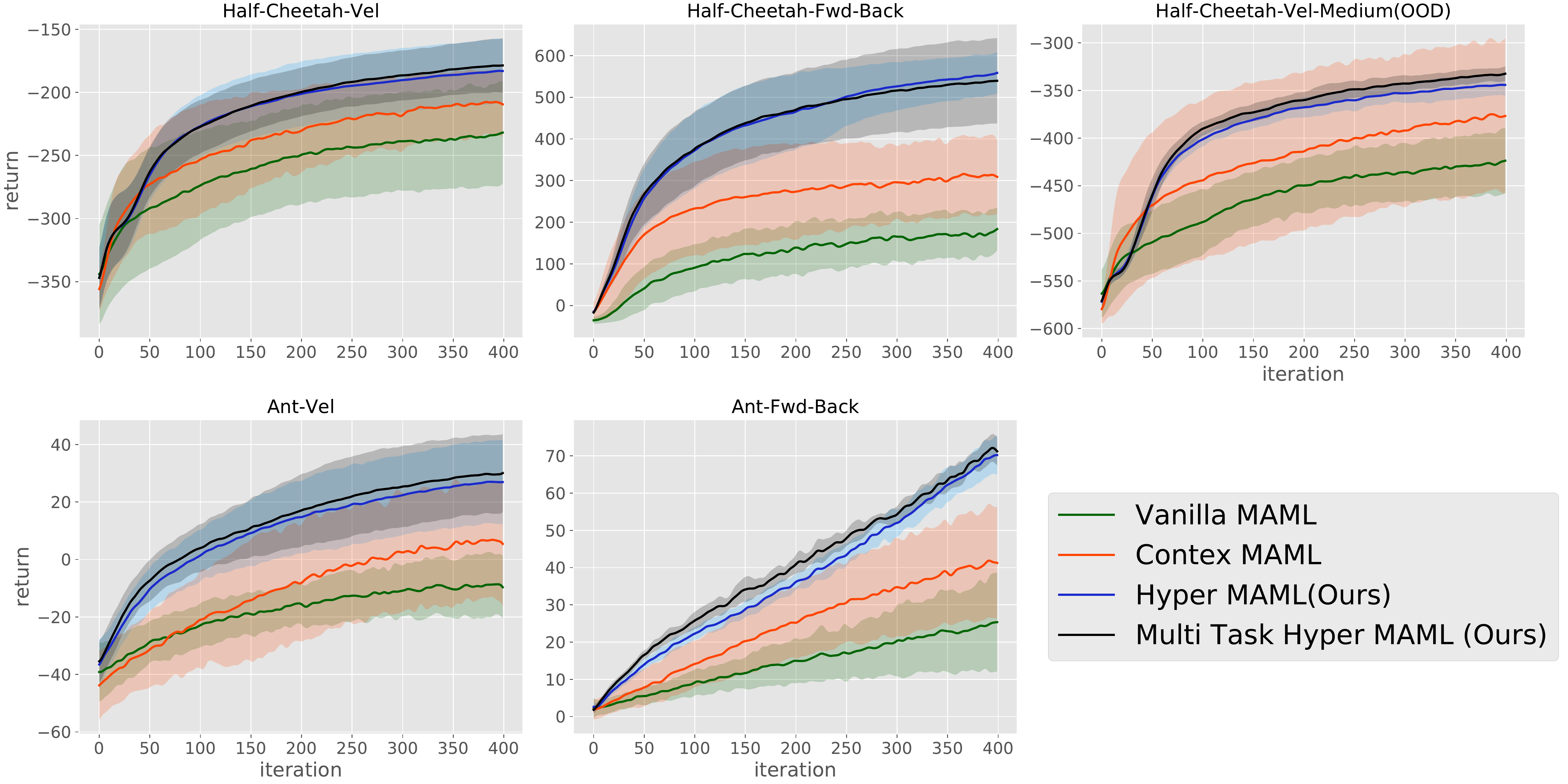}
\end{center}
\caption{Multi-Task Hyper-MAML performance over the \textbf{test tasks} with a Hypernetwork policy and a multi-task objective. Using a multi-task objective matches or outperforms the MAML objective without the need for the adaption step training.}
\label{fig:MAML_Multi}
\end{figure*}

\begin{table}[hbt!]
\caption{MAML highest rewards.}
\label{MAML_final_reward}
\begin{center}
\begin{tabular}{lccccc}
\multicolumn{1}{c}{\bf Algorithm}  &\multicolumn{1}{c}  {\bf Cheetah-Vel}  &\multicolumn{1}{c}{\bf Cheetah-Fwd-Back}    &\multicolumn{1}{c}{\bf Cheetah-Vel-Med}    &\multicolumn{1}{c}{\bf Ant-Vel}    &\multicolumn{1}{c}{\bf Ant-Fwd-Back}
\\ \hline \\
MAML        &$-231\pm40$       &$183\pm51$             &$-423\pm33$           &$-8\pm10$ &$25\pm13$ \\
Context MAML &$-207\pm25$       &$315\pm93$             &$-374\pm79$           &6$\pm19$ &$41\pm15$ \\
Hyper Multi-Task (Ours)   &$-\textbf{178}\pm\textbf{21}$       &$539\pm102$             &$-\textbf{332}\pm\textbf{7}$           &$\textbf{30}\pm\textbf{13}$ &$\textbf{72}\pm\textbf{3}$ \\
Hyper MAML (Ours) &$-182\pm25$       &$\textbf{558}\pm\textbf{49}$             &$-344\pm10$           &$27\pm14$ &$70\pm5$ \\ 

\end{tabular}
\end{center}
\end{table}

\begin{table}[hbt!]
\caption{MAML Hyperparameters}
\label{MAML_hyper_parameters}
\begin{center}
\begin{tabular}{lcc}
\multicolumn{1}{c}{\bf Hyperparameter}  &\multicolumn{1}{c}{\bf MAML}  &\multicolumn{1}{c}{\bf Hyper MAML (Ours)}
\\ \hline \\
Batch Size            &20 &20\\
Meta batch Size            &40 &40\\
Discount Factor            &0.95 &0.95 \\
Num of Iterations            &400 &400 \\
Max KL            &$1e^{-2}$ &$1e^{-2}$ \\
LS Max Steps    &20 &20 \\
Episode Max Steps    &200 &200 \\
\end{tabular}
\end{center}
\end{table}

% \newpage
% \
% \newpage

\subsection{PEARL}

\begin{figure*}[hbt!]
\begin{center}
%\framebox[4.0in]{$\;$}
    \includegraphics[width=1.\linewidth]{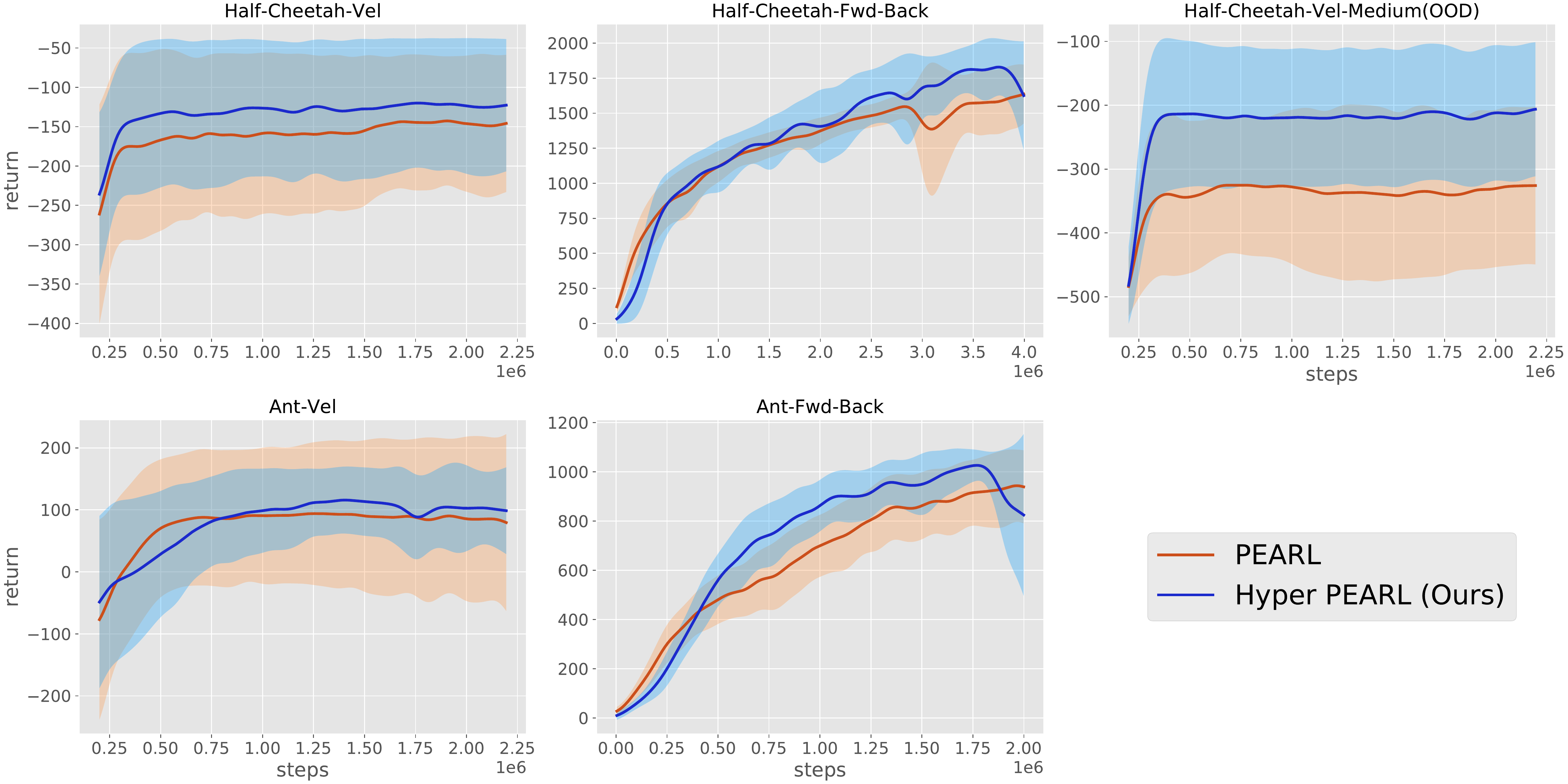}
\end{center}
\caption{PEARL Performance over the \textbf{test tasks} with policy and critic Hypernetworks compared to MLP policy and critic. Hypernetwork outperforms or matches MLP in all environments.}
\label{fig:PEARL}
\end{figure*}

\begin{figure*}[hbt!]
\begin{center}
%\framebox[4.0in]{$\;$}
    \includegraphics[width=1.\linewidth]{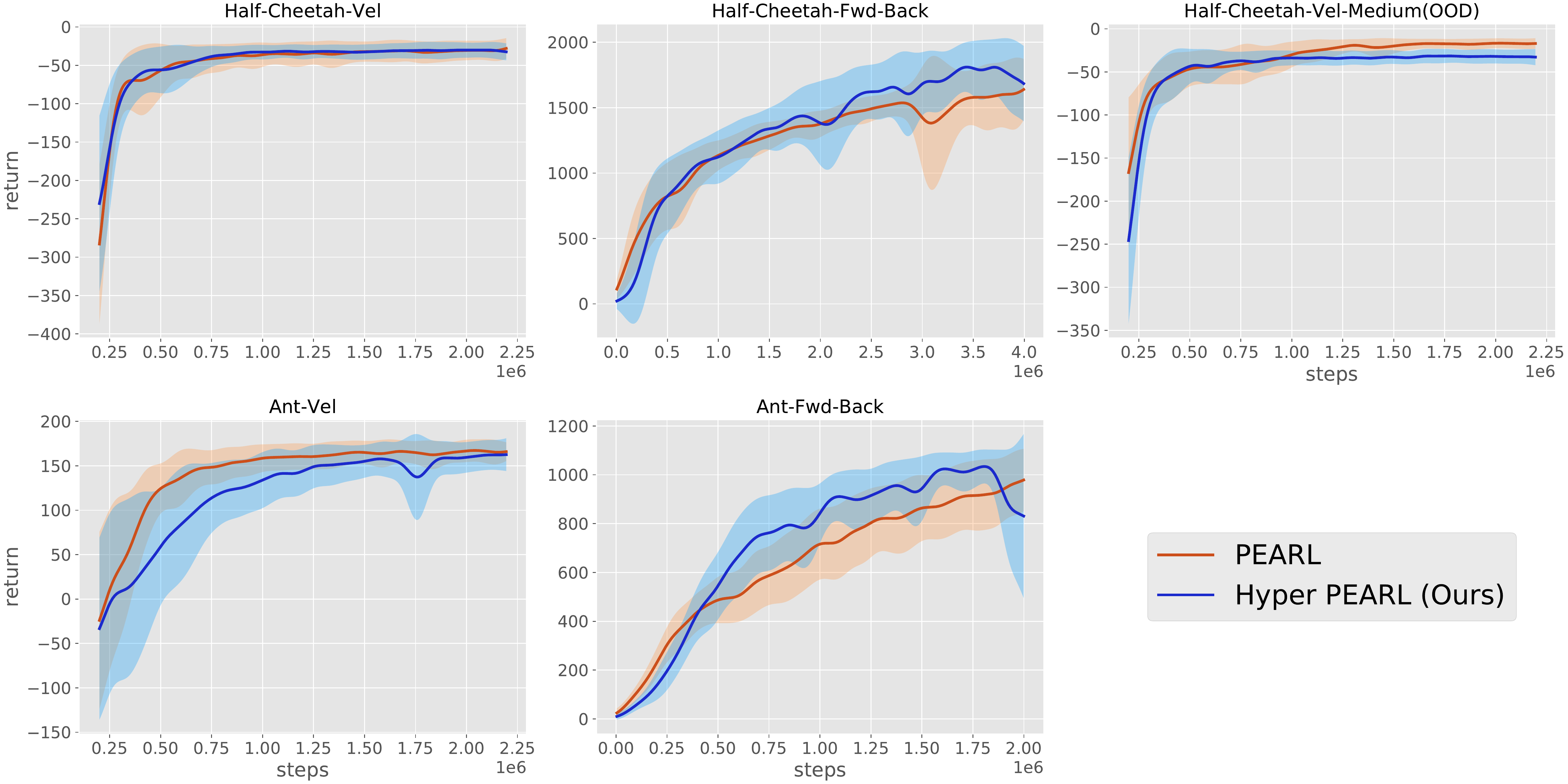}
\end{center}
\caption{PEARL Performance over \textbf{training tasks} with policy and critic Hypernetworks compared to MLP policy and critic.}
\label{fig:PEARL_train}
\end{figure*}

\begin{table}[hbt!]
\caption{PEARL highest rewards.}
\label{PEARL_final_reward}
\begin{center}
\begin{tabular}{lccccc}
\multicolumn{1}{c}{\bf Algorithm}  &\multicolumn{1}{c}  {\bf Cheetah-Vel}  &\multicolumn{1}{c}{\bf Ant-Vel}    &\multicolumn{1}{c}{\bf Cheetah-Vel-Med}    &\multicolumn{1}{c}{\bf Cheetah-Fwd-Back}    &\multicolumn{1}{c}{\bf Ant-Fwd-Back}
\\ \hline \\
PEARL  &$-142\pm82$       &$1636\pm210$             &$-325\pm109$           &$93\pm115$ &$943\pm146$ \\
Hyper PEARL (Ours) &$-\textbf{119}\pm\textbf{82}$       &$\textbf{1828}\pm\textbf{203}$             &$-\textbf{206}\pm\textbf{104}$           &$\textbf{115}\pm\textbf{54}$ &$\textbf{1026}\pm\textbf{62}$ \\

\end{tabular}
\end{center}
\end{table}

\begin{table}[hbt!]
\caption{PEARL Hyperparameters}
\label{PEARL_hyper_parameters}
\begin{center}
\begin{tabular}{lcc}
\multicolumn{1}{c}{\bf Hyperparameter} &\multicolumn{1}{c}{\bf PEARL}  &\multicolumn{1}{c}{\bf Hyper PEARL}
\\ \hline \\
Actor Learning Rate  &$3e^{-4}$ &$1e^{-4}$\\
Critic Learning Rate   &$3e^{-4}$  &$5e^{-5}$  \\
Context Learning Rate   &$3e^{-4}$  &$3e^{-4}$  \\
Value Learning Rate   &$3e^{-4}$  &$5e^{-5}$  \\
Optimizer             &Adam &Adam\\
Batch Size            &256 &256 \\
'Dir' Tasks Meta batch Size           &4 &4 \\
'Vel' Tasks Meta batch Size           &16 &16 \\
Target critic update     &0.005 &0.005\\
Discount Factor            &0.99 &0.99 \\
Num of Iterations            &400 &400 \\
Reward Scaling             &5 &5 \\
Episode Max Steps    &200 &200 \\
\end{tabular}
\end{center}
\end{table}

% \newpage
% \
% \newpage

% \bibliography{mybib}
% \bibliographystyle{apalike}

% \vfill

\end{document}